\documentclass[journal]{IEEEtran}

\usepackage{enumerate}
\usepackage{graphicx}
\usepackage{epsf}
\usepackage{subfigure}
\usepackage{amsmath}
\usepackage{amssymb}
\usepackage{array}
\usepackage{setspace}
\usepackage[amsmath,thmmarks,amsthm]{ntheorem}
\usepackage[ruled,lined]{algorithm2e}
\usepackage{algorithmic}
\usepackage{color}
\usepackage{amsfonts}
\usepackage{dsfont}
\usepackage{xfrac}
\usepackage{breqn}
\usepackage{bbm}
\usepackage{cite}
\usepackage{multirow}

\graphicspath{{Fig/},{fig/}}

\theoremseparator{.}

\newtheorem{proposition}{Proposition}

\newtheorem{corollary}{Corollary}

\newcommand{\figsize}{3}

\begin{document}


\title{\LARGE Movable-Element Multi-Functional RIS-Assisted Hybrid RSMA-NOMA: A Constrained Hybrid Reinforcement Learning Approach}


\author{
Li-Hsiang Shen,~\IEEEmembership{Member,~IEEE},
Chi-Te Kuo
}

\maketitle

\begin{abstract}
In this work, we propose a movable-element (ME) multi-functional reconfigurable intelligent surface (MF-RIS) architecture, termed MM-RIS, which integrates element mobility with adaptive signal reflection, transmission, amplification, and energy harvesting (EH) to enhance communication flexibility and energy self-sustainability. A hybrid rate-splitting multiple access (RSMA) and non-orthogonal multiple access (NOMA) transmission scheme is further designed to improve spectral efficiency. We formulate an energy efficiency (EE) maximization problem by jointly optimizing transmit beamforming, power and rate allocation, MM-RIS configurations, panel deployment, and ME positions. To address the resulting non-convex mixed continuous-discrete optimization problem, we propose constrained hybrid deep reinforcement learning with federated transformer and parametrized sharing (CRAFTS). CRAFTS integrates multi-agent proximal policy optimization (MAPPO) and multi-agent deep Q-networks (MADQN) through bi-directional parametrized sharing to handle continuous and discrete actions, respectively. Moreover, a transformer-based architecture and federated learning (FL) are incorporated to enhance state representation and distributed knowledge sharing, whilst a Lagrangian-based adaptive penalty mechanism facilitates the constraint satisfaction. Simulation results have demonstrated that the proposed MM-RIS architecture and CRAFTS framework substantially improve EE and constraint satisfaction compared to the benchmarks in the open literature. Moreover, the results validate the benefits of spatially reconfigurable MM-RIS, hybrid RSMA-NOMA transmissions, and intelligent collaborative learning, highlighting the potential of the proposed framework for energy-efficient and adaptive next-generation wireless networks.
\end{abstract}

\begin{IEEEkeywords}
Multi-functional RIS, movable-elements, RSMA, energy efficiency, constrained hybrid reinforcement learning.
\end{IEEEkeywords}

{\let\thefootnote\relax\footnotetext
{Li-Hsiang Shen and Chi-Te Kuo are with the Department of Communication Engineering, National Central University, Taoyuan 320317, Taiwan (email: shen@ncu.edu.tw and harber9322242@gmail.com). (Corresponding Author: Li-Hsiang Shen)}
}

\section{Introduction}

\subsection{Background and Literature Reviews}
In the era of sixth-generation (6G) wireless communications, the scarcity of spectrum resources and the explosive growth in heterogeneous service demands have driven researchers to explore more efficient and flexible transmission technologies \cite{1}. To further enhance spectral and interference management in dense wireless deployments, rate-splitting multiple access (RSMA) has recently emerged as a highly promising transmission strategy \cite{2}. By splitting messages into common and private components and superimposing them through linear precoding, RSMA provides flexible interference handling. Owing to this flexibility, RSMA demonstrates improved robustness against imperfect channel state information and offers superior rate-reliability trade-offs in interference-limited environments \cite{3}. However, despite its advantages, the performance of RSMA may still be insufficient in highly dynamic or heavily loaded networks, where interference suppression remains challenging. To further maximize its performance potential, successive interference cancellation (SIC) from non-orthogonal multiple access (NOMA) \cite{noma} can be incorporated into RSMA, forming a hybrid structure. By combining rate-splitting capability of RSMA with power-domain multiplexing of NOMA \cite{7}, such hybrid design enhances multi-user signal separation and strengthens interference mitigation, enabling greater spectral efficiency and improved link reliability. However, the performance of such multiple accesses is often hindered by severe channel fading and inter-user interference, which potentially degrades the signal quality and limits its practical deployment \cite{9}.

To address the above-mentioned issues, reconfigurable intelligent surfaces (RIS) have been proposed as an enabling technology \cite{10, 11}. By adjusting the configuration of RIS elements, a virtual line-of-sight (LoS) link can be established to bypass obstacles between the transmitter and receiver and to improve the signal quality for combating the channel fading \cite{my_fdris3, 12}. Benefited from these advantages, RIS can enable precise signal distribution, significantly improving communication performances. Although RIS holds promise for next-generation wireless networks, its practical deployment remains constrained by several inherent limitations. Specifically, it can only be operated over a 180-degree half-space coverage area, and mainly depends on external power supplies, which confines its independent operation and scalability. To overcome the backdrops, the concept of multi-functional RIS (MF-RIS) has been introduced in \cite{14, my_ojcom}, integrating simultaneous transmission and reflection (STAR) capabilities to provide 360-degree full-space coverage and ubiquitous connectivity \cite{15, 16, my_star}. Additionally, MF-RIS is equipped with radio-frequency (RF) and energy harvesting (EH) circuits, enabling it to capture energy from incident electromagnetic signals and is capable of operating in a self-sustainable manner \cite{18, 19, 20}. Such design alleviates the dependence on wired power infrastructure or frequent battery replacement, enhancing the overall energy efficiency (EE). Furthermore, by embedding active components, MF-RIS is capable of conducting signal amplification \cite{22} for compensating severe path loss, improving the quality-of-services (QoS) of users in weak channel conditions. When combined with advanced multiple access schemes, MF-RIS can substantially strengthen the propagation conditions for user clusters, especially under NOMA-/RSMA-based transmissions.

However, the practical implementation of active MF-RIS is also hindered by hardware-induced constraints such as the coupled phase-shift effects \cite{23}. Unlike ideal models, active components frequently exhibit a strong interdependence between the phase-shift and amplification gains \cite{24}. This coupling effect might reduce the available DoFs in the optimization space, which necessitates more complex coordination strategies to achieve the desired beamforming gains. To overcome these limitations, beyond static metasurfaces, fluid antenna (FA) systems have recently attracted attention for their ability to dynamically reconfigure the effective antenna position within a given aperture by leveraging fluid-based elements \cite{my_lim, 26}. Similar to FA, movable-antenna (MA) structures \cite{MA1, new1, new2, new3, new4} are also developed. By allowing the radiating point to move within a predefined region, FA/MA can exploit spatial diversity in small-scale fading without requiring multiple RF chains, offering new degrees of freedom (DoFs) at the physical layer. This dynamic ability is particularly appealing in environments with rapidly varying channels or strong blockage effects. In the works of \cite{new1, new2}, antenna position of the MA is deployed to increase the spatial DoF and enhance the performance of the integrated sensing and communication systems. The paper \cite{new3} investigates that the MA is capable of achieving full array gain over the desired direction, whereas null steering in undesired directions can be realized. The work of \cite{new4} further discusses near-field digital and analog beamforming in MA-aided multiuser communication systems. In the work of \cite{6dma}, six-dimensional MA is introduced that exploits the transceiver spatial variation by considering both xyz-positions and rotations to improve performance in terms of geometric array gains, spatial multiplexing and interference mitigation. Inspired by the FA/MA \cite{fas1, 10, MA1, new1, new2, new3, new4}, the movable-elements (ME) are introduced \cite{25, lim1, pat, movele1} to enable element-wise spatial adjustment across three-dimensional (3D) Cartesian axes of xyz-domain. Different from MAs, ME-enhanced RIS further aligning to proper small-scale channels induces a more challenging task when configuring well-positioned elements in the cascaded channels from the base station (BS), RIS to users \cite{lim1}. The authors in \cite{26} have proposed a movable-element RIS (ME-RIS) framework, utilizing an element-wise position optimization approach to significantly enhance beamforming gains. Furthermore, the concept of fluid-element STAR surfaces called FIRES is designed in \cite{27}, which extends the FA capability to STAR-RIS architectures. In paper \cite{new5}, MA-aided NOMA is considered for multi-user downlink communications, where the MA positions at the users, BS beamforming, and SIC decoding indicators are jointly optimized. Moreover, the work of \cite{new6} investigates the benefits of FA with dynamic port reconfiguration for enhancing RSMA performance, where the outage probability and ergodic rate are analyzed to quantify the performance gains provided by FA. However, most existing FA/MA-integrated works focus on single-link scenarios or basic reflection/transmission structures, relying on the external power grid usage. The integration of ME into MF-RISs that simultaneously supports element mobility, reflection/transmission capability, signal amplification, and self-sustainable EH remains largely unexplored.

With the rapid advancement of artificial intelligence (AI), the integration of learning-based optimizations into wireless communication systems has become an important research trend \cite{11, 29}, especially for complex environments. Owing to the strong coupling between multiple strategies such as beamforming and RIS configurations, the resulting optimization problems typically exhibit high dimensionality, non-linearity, and mixed discrete-continuous structures, rendering traditional analytical or convex-approximation methods inadequate in highly dynamic environments. In addressing these challenges, deep reinforcement learning (DRL) has been widely adopted for resource allocation and control \cite{11, my_rl_wu, 30, han, 31, my_rl_yt} thanks to its capability of learning near-optimal policies from interactions with complex environments. In DRL, an agent performs the determined actions and then attains rewards, followed by the observed updated states for the next-round policy decision. In \cite{11}, a multi-agent system is considered, where each agent corresponds to an individual RIS, and the joint beamforming and RIS deployment are optimized using multi-agent RL techniques. In paper \cite{29}, a multi-agent system is designed for low-complexity beamforming and power control in multi-antenna-based NOMA networks. The work of \cite{my_rl_wu} designs a hierarchical RL structure for RIS optimizations, where the core-RL optimizes the beamforming and configurations, whereas auxiliary RL optimizes the hyperparameters of core-RL for better learning process. Nonetheless, most existing DRL approaches are either centralized, leading to scalability and overhead issues, or designed for purely continuous or purely discrete state-action spaces, which are inadequate for complex wireless networks. This motivates the development of hybrid DRL frameworks that can handle continuous-discrete action spaces in a unified manner \cite{30, han, 31, my_ojcom}. In the work of \cite{30}, hybrid continuous and discrete action outputs are designed as a parametrized sharing architecture. Extended from \cite{30}, the paper \cite{han} designs hybrid DRL for tackling continuous STAR-RIS configurations and the discrete coupled-phase problems. The work of \cite{31} designs twin-models and bi-directional sharing structure in hybrid DRL architecture for complex MF-RIS policies associated with continuous-discrete actions. 

Furthermore, conventional DRL architectures relying solely on multiple deep neural layer learning may struggle to capture complex dependencies between high-dimensional states and actions, especially in multi-agent coordination. To address this, adoption of transformer architectures \cite{32, my_ojcom} for enhancing feature extraction and representation learning is needed \cite{my_ojcom, 33}. Additionally, federated learning (FL) \cite{34} is utilized for enabling decentralized model aggregation and privacy-aware knowledge sharing across distributed learning agents without raw data exchange. In addition to architectural enhancements, ensuring strict adherence to operational boundaries such as QoS and power limits is crucial for practical reliability. While conventional reward shaping is often used to penalize constraint violations, its static nature frequently fails to maintain an optimal balance between performance optimization and safety requirements \cite{35, 37}. This necessitates the adoption of Lagrangian-based primal-dual optimization techniques \cite{35}, i.e., an adaptive Lagrangian multiplier method to dynamically tune penalty coefficients based on instantaneous violations. However, the applications of enhanced architectures of multi-agent hybrid DRL with transformer, parametrized sharing, federated cooperation, and adaptive Lagrangian constraint management to complex wireless network systems remain substantially unexplored. Table \ref{comparetable} provides a comprehensive comparison between the proposed work and representative studies in the existing literature.

\begin{table*}[!ht]
	\centering
	\scriptsize
	\caption{Comparison of Open Literature}
	\label{comparetable}
	\resizebox{\textwidth}{!}{
	\begin{tabular}{|l||c|c|c|c|c|c|c|c|c|c|c|c|c|c|c|c|c|c|}
	\hline
	& \cite{11}
	& \cite{14,23}
	& \cite{my_ojcom}
	& \cite{15,16,17}
	& \cite{18,19,20}
	& \cite{21,22}
	& \cite{25,MA1,my_lim,new1,new2,new3,new4}
	& \cite{movele1,lim1,26}
	& \cite{new5}
	& \cite{new6}
	& \cite{29}
	& \cite{my_rl_wu}
	& \cite{han}
	& \cite{31,my_ojcom}
	& \cite{my_rl_yt}
	& \cite{34}
	& \cite{my_ps}
	& This Work
	\\ \hline\hline
	
	Movable architectures
	&
	& 
	& \checkmark
	& 
	& 
	& 
	& \checkmark
	& \checkmark
	& \checkmark
	& \checkmark
	& 
	& 
	& 
	& 
	& 
	& 
	& 
	& \pmb{\checkmark}
	\\ \hline
	
	STAR operation
	&
	& \checkmark
	&
	& \checkmark
	& 
	& 
	& 
	& \checkmark
	&
	&
	& 
	& \checkmark
	& \checkmark
	& \checkmark
	& \checkmark
	& \checkmark
	& \checkmark
	& \pmb{\checkmark}
	\\ \hline
	
	Signal amplification
	&
	& \checkmark
	& \checkmark
	& 
	& 
	& \checkmark
	& 
	& 
	&
	&
	& 
	& 
	& 
	& \checkmark
	& 
	& \checkmark
	& \checkmark
	& \pmb{\checkmark}
	\\ \hline
	
	Energy harvesting
	&
	& \checkmark
	& \checkmark
	& 
	& \checkmark
	& 
	& 
	& 
	&
	&
	& 
	& 
	& 
	& \checkmark
	& 
	& \checkmark
	& \checkmark
	& \pmb{\checkmark}
	\\ \hline
	
	RSMA
	&
	& 
	&
	& 
	& 
	& 
	& 
	& 
	&
	& \checkmark
	& 
	& 
	& 
	& 
	& 
	& 
	& 
	& \pmb{\checkmark}
	\\ \hline
	
	NOMA
	&
	& \checkmark	
	& 
	& 
	& 
	& 
	& 
	& 
	& \checkmark
	&
	& \checkmark
	& 
	& 
	& 
	& 
	& 
	& \checkmark
	& \pmb{\checkmark}
	\\ \hline
	
	Multi-RIS deployment
	& \checkmark
	& 
	& \checkmark
	& 
	& 
	& 
	& 
	& 
	&
	&
	& 
	& \checkmark
	& 
	& 
	& \checkmark
	& \checkmark
	& \checkmark
	& \pmb{\checkmark}
	\\ \hline\hline
	
	Multi-agent learning
	& \checkmark	
	& 
	& \checkmark
	& 
	& 
	& 
	& 
	& 
	&
	&
	& \checkmark
	& 
	& 
	& 
	& \checkmark
	& 
	& \checkmark
	& \pmb{\checkmark}
	\\ \hline
	
	Hybrid DRL
	&
	& 
	& \checkmark
	& 
	& 
	& 
	& 
	& 
	&
	&
	& 
	& 
	& \checkmark
	& \checkmark
	& 
	& 
	& \checkmark
	& \pmb{\checkmark}
	\\ \hline
	
	Transformer
	&
	& 
	& \checkmark
	& 
	& 
	& 
	& 
	& 
	&
	&
	& 
	& 
	& 
	& 
	& 
	& 
	& 
	& \pmb{\checkmark}
	\\ \hline
	
	Federated learning
	& \checkmark
	& 
	&
	& 
	& 
	& 
	& 
	& 
	&
	&
	& 
	& \checkmark
	& 
	& 
	& 
	& \checkmark
	& 
	& \pmb{\checkmark}
	\\ \hline
	
	Parametrized sharing
	&
	& 
	&
	& 
	& 
	& 
	& 
	& 
	& 
	&
	&
	& 
	& 
	& 
	& 
	& 
	& \checkmark
	& \pmb{\checkmark}
	\\ \hline
	
	Adaptive constraints
	&
	& 
	&
	& 
	& 
	& 
	& 
	& 
	&
	&
	& 
	& 
	& 
	& 
	& 
	& 
	& 
	& \pmb{\checkmark}
	\\ \hline
	
	\end{tabular}}
\end{table*}

\subsection{Challenges and Contributions}

The integration of ME and MF-RIS with hybrid RSMA-NOMA transmissions remains unstudied but introduces several fundamental challenges. The incorporation of MEs into MF-RIS significantly enlarges the spatial DoFs, leading to a high-dimensional and highly-coupled optimization problem involving beamforming, power control, rate allocation, phase-shift coefficients, and metasurface locations as well as element positions. The comparable overhead associated with ME, including movement power consumption and repositioning delay may be introduced and cannot be neglected compared to the operational power and channel coherence time. This results in a non-convex mixed continuous-discrete problem that is difficult to solve using conventional optimization techniques. Additionally, the hybrid RSMA-NOMA transmission scheme introduces complex interference structures across clustered users, which require dynamic and coordinated resource allocation to strike a compelling balance between spectral efficiency and interference mitigation. The coupling transmissions among communication variables further increases the difficulty of joint optimization. Furthermore, the presence of EH and amplification functionalities imposes stringent system constraints of power budgets, QoS requirements, and hardware limitations. Self-sustainable capability should be achieved while supporting multiple functionalities of MF-RIS. From algorithmic perspectives, the large-scale multi-agent nature of complex wireless networks leads to scalability and non-stationarity issues, where centralized learning becomes impractical due to excessive overhead, whilst decentralized learning suffers from limited coordination and potential convergence issues. This motivates us to conceive an advanced DRL-based framework for the integration of ME and MF-RIS in hybrid RSMA-NOMA transmissions. The main contributions of this paper are elaborated as follows:
\begin{itemize}

	\item We propose a novel ME-enhanced MF-RIS (MM-RIS)-assisted communication framework for hybrid RSMA-NOMA downlink transmissions. The proposed MM-RIS integrates MEs with a deployable MF-RIS capabilities, enabling adaptive signal reflection, transmission, amplification, and EH, which significantly enhances system adaptability, coverage, and energy self-sustainability in dynamic wireless environments.

	\item We formulate an EE maximization problem for the proposed MM-RIS-assisted hybrid RSMA-NOMA. The problem jointly optimizes transmit beamforming, power and rate allocation, MM-RIS parameters of amplitude coefficients, phase-shifts, and EH ratios, as well as metasurface deployment and ME positions. The formulated problem is highly non-convex and involves mixed continuous-discrete variables with intricate coupling constraints under practical limitations such as BS power budgets, user rate requirements and rate allocation bounds as well as minimum inter-element spacing, coupled phase-shifts, self-sustainability, and deployment boundaries of MM-RIS.

	\item We develop a constrained hybrid reinforcement learning framework with federated transformer and parametrized sharing (CRAFTS) scheme which integrates multi-agent proximal policy optimization (MAPPO) and multi-agent deep Q-network (MADQN) to handle continuous and discrete decision variables, respectively. A transformer-based architecture is incorporated to capture complex state-action dependencies, while FL is employed to facilitate decentralized training and accelerate convergence via collaborative knowledge sharing. Moreover, a bi-directional parametrized sharing mechanism is proposed to enable coordinated optimization across heterogeneous action spaces. Furthermore, to guarantee constraint satisfaction, a Lagrangian-based adaptive penalty mechanism is conceived to dynamically update multipliers, ensuring stable and reliable learning under practical system constraints.

	\item Simulation results demonstrate that the proposed CRAFTS-enabled MM-RIS framework significantly outperforms the conventional benchmarks, including standalone PPO/DQN, non-coupled learning benchmarks, and different single-functional RIS architectures without amplification or EH in the open literature. Particularly, the proposed MM-RIS-aided RSMA-NOMA achieves substantial EE gains and constraint satisfaction across diverse scenarios, validating the effectiveness and robustness of the proposed architecture for next-generation wireless networks.

\end{itemize}

This paper is organized as follows. Section \ref{sec_sys} introduces the system model of MM-RIS-aided hybrid RSMA-NOMA and the corresponding problem formulation. Section \ref{sec_alg} elaborates the proposed framework of CRAFTS. Simulation results are provided in Section \ref{sec_sim}, whereas the conclusion is drawn in Section \ref{sec_con}.

\section{System Model and Problem Formulation}
\label{sec_sys}

\begin{figure}[!t]
\centering
\includegraphics[width=3.3in]{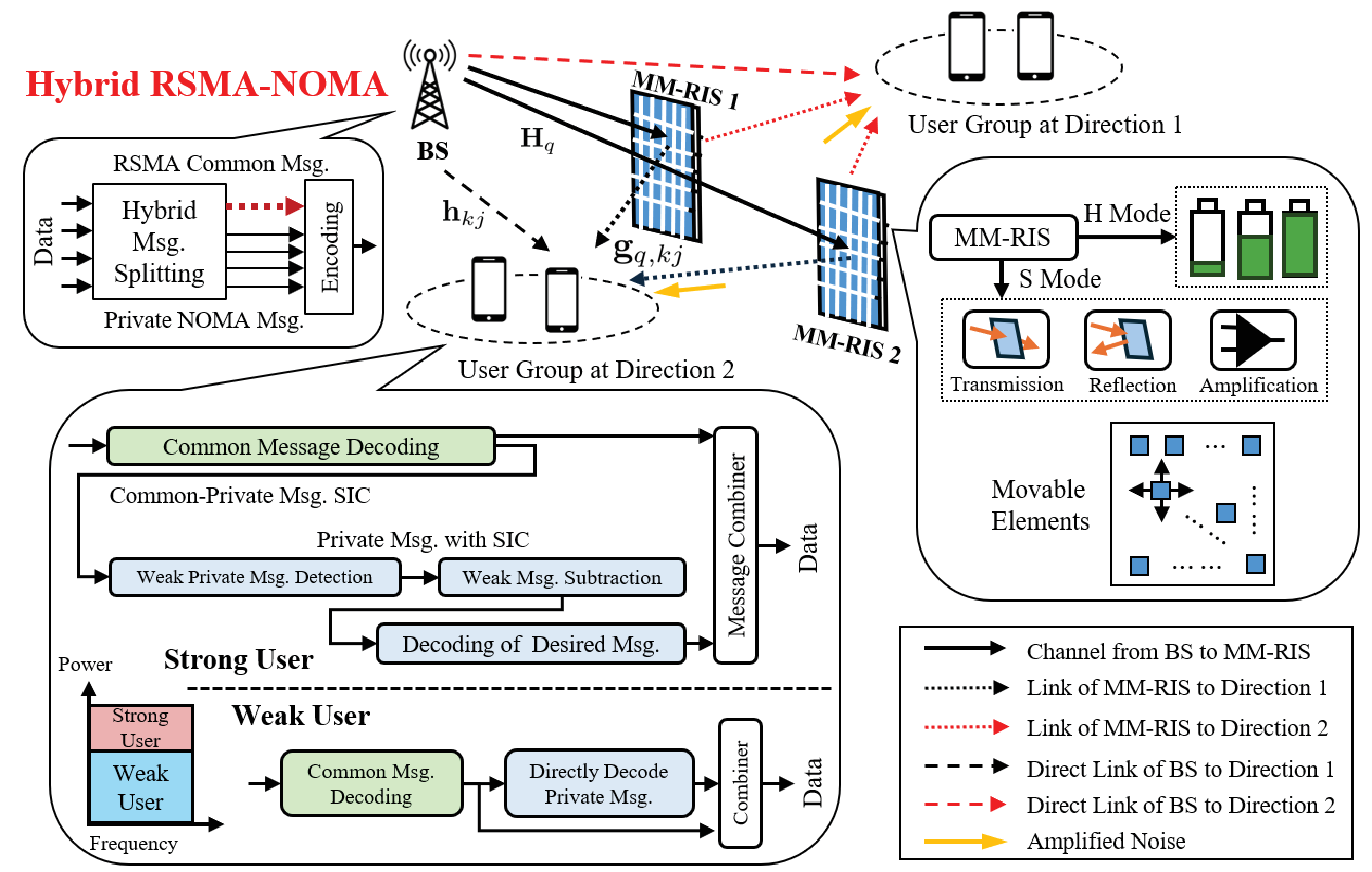}
\caption{The proposed multi-MM-RISs-aided hybrid RSMA-NOMA network.} \label{Fig.1}
\end{figure}
	Fig. \ref{Fig.1} illustrates a multi-MM-RIS-assisted hybrid RSMA-NOMA downlink network. We consider a BS equipped with $N$ transmit antennas with its set of $\mathcal{N} = \{1, 2, \ldots, N\}$, serving $J_k$ single-antenna users at direction $k$ indexed by the set of $\mathcal{J}_k = \{1, 2, \ldots , J_k\}$. Note that one direction means a group of users. We consider a total of $K$ directions for user groups, where $\mathcal{K}=\{1,2, \ldots,K\}$. We consider $Q$ MM-RISs with its set of $\mathcal{Q} = \{1, 2, \ldots, Q\}$. We define user $U_{kj}$ as the $j$-th user in the $k$-th directional user group, where $k\in\mathcal{K}$ indicates the group index, and $j\in\mathcal{J}_k$ stands for the intra-group user index. Furthermore, we consider a Cartesian coordinate system with the locations of the BS, MM-RIS, and user $U_{kj}$ defined as $\mathbf{w}_b = [x_b, y_b, z_b]^{\mathrm{T}}$, $\mathbf{w}_q = [x_q, y_q, z_q]^{\mathrm{T}}$, and $\mathbf{w}_{kj} = [x_{kj}, y_{kj}, 0]^{\mathrm{T}
}$, respectively. Note that $(\cdot)^{\mathrm{T}}$ indicates the transpose operation. Due to the limited service coverage of MM-RIS, its deployable region is also restricted by $\mathcal{W}$, associated with the following constraint \cite{14} $\mathbf{w}_q \in \mathcal{W} = \Big\{ [x_q, y_q, z_q]^{\mathrm{T}} \Big| \mathbf{w}_{\text{min}} \preceq \mathbf{w}_q \preceq \mathbf{w}_{\text{max}} \Big\}, \forall q\in\mathcal{Q}$,
where $\mathbf{w}_{\text{min}}$ and $\mathbf{w}_{\text{max}}$ denote the deployable minimum and maximum regions, respectively. Each MM-RIS is equipped with $M$ MEs indexed by the set of $\mathcal{M} = \{1, 2, \ldots, M\}$. The overall radiating surface of the MM-RIS is modeled as a rectangular aperture array with a size of $A = A_h\times A_v$, representing the range where each ME is able to move. We define the relative position of the $m$-th ME of the $q$-th MM-RIS by $\mathbf{r}_{q,m} = [x_{q,m},y_{q,m}]^{\rm T}$. The set of ME is denoted as $\mathcal{R}_q = \{\mathbf{r}_{q,m} | \forall m\in\mathcal{M}\}$. Moreover each MM-RIS configuration can be defined as $\mathbf{\Theta}_{q}^k = \operatorname{diag}\bigg( 
    \alpha_{q,1} \sqrt{\beta_{q,1}^k} e^{j\theta_{q,1}^k}, \ldots, \alpha_{q,M} \sqrt{\beta_{q,M}^k} e^{j\theta_{q,M}^k} 
\bigg)$,
where $ \theta_{q,m}^k \in [0, 2\pi) $ and $ \beta_{q,m}^k \in [0, \beta^k_{\text{max}}] $ indicate the phase-shift and amplitude coefficients of MM-RIS at the $k$-th direction, respectively. Note that $\beta_{\text{max}}^k > 1$ denotes the signal amplification as an active RIS, whereas $\beta_{\text{max}}^k  \leq 1$ indicates a passive RIS without amplification capability. Each element of the MM-RIS can be operated in the harvesting mode (H mode) and signal mode (S mode) by tuning the EH coefficient $\alpha_{q,m}\in \{0, 1\}$. Note that $\alpha_{q,m} = 1$ implies that MM-RIS operates in S mode, whereas $\alpha_{q,m} = 0$ indicates the H mode. For example, we define $s_{q,m}$ as the incident signal received by the $m$-th element of the $q$-th MM-RIS when a STAR capability is considered. Therefore, the signals harvested, reflected and refracted by the $m$-th element on the $q$-th MM-RIS given input signal $s_{q,m}$ are modeled as $y_{q,m}^{\rm h} = (1- \alpha_{q,m})s_{q,m}$, $ y_{q,m}^{\rm r} =\alpha_{q,m}\sqrt{\beta_{q,m}^{\rm r}} e^{j\theta_{q,m}^{\rm r}} s_{q,m}$ and $ y_{q,m}^{\rm t} =\alpha_{q,m}\sqrt{\beta_{q,m}^{\rm t}} e^{j\theta_{q,m}^{\rm t}} s_{q,m}$, respectively. When introducing signal amplification into the MM-RIS, the coupled amplitude and phase-shift model of the MM-RIS is given by \cite{23}
\begingroup
\allowdisplaybreaks
\begin{align} \label{coupl}
	\frac{1 - (\beta_{q,m}^{\rm r})^2 - (\beta_{q,m}^{\rm t})^2}{2}
& \leq 
\beta_{q,m}^{\rm r} \beta_{q,m}^{\rm t} \cos(\theta_{q,m}^{\rm t} - \theta_{q,m}^{\rm r}) \notag \\
	& \leq 
\frac{(\beta_{q,m}^{\rm r})^2 + (\beta_{q,m}^{ \rm t})^2 - 1}{2},
\end{align}
\endgroup
where $\{\rm t,r\}$ indicates the transmission (refraction) and reflection when STAR capability is leveraged with $K=2$.

\subsection{Channel Model}
We consider the Rician fading channel model \cite{31} between the BS and the $q$-th MM-RIS as $\mathbf{H}_{q} \in \mathbb{C}^{M \times N}$
\begin{align}
	\mathbf{H}_q = \sqrt{h_0 d_{q}^{-k_0}}
\left( \sqrt{\frac{\beta_0}{\beta_0 + 1}} \mathbf{H}_q^{\text{LoS}} + \sqrt{\frac{1}{\beta_0 + 1}} \mathbf{H}_q^{\text{NLoS}} \right),
\label{4}
\end{align}
where $h_0$ is the pathloss at the reference distance of 1 meter, $d_{q}= \|\mathbf{w}_b-\mathbf{w}_q\| $ is the distance between BS and MM-RIS $q$, and $k_0$ is the pathloss exponent. Notation of $\beta_0$ represents the Rician factor adjusting the portion of LoS path $\mathbf{H}_q^{\text{LoS}}$ and non-LoS (NLoS) component of $\mathbf{H}_q^{\text{NLoS}}$. The LoS component $ \mathbf{H}_q^{\text{LoS}} $ is expressed as \cite{26} $\mathbf{H}_q^{\text{LoS}} = \mathbf{a}'_q(\psi'_{a,q}, \psi'_{e,q}, \mathcal{R}_q) \mathbf{a}_b^{\mathrm{H}}(\psi_b)$,
where $(\cdot)^{\mathrm{H}}$ indicates the Hermitian operation. The transmit steering vector of the BS and the receiving steering vector of the $q$-th MM-RIS associated with the $m$-th ME are respectively given by
\begingroup
\allowdisplaybreaks
\begin{align}
	[\mathbf{a}_b(\psi_b)]_n &= e^{j \frac{2\pi}{\lambda} (n-1) \sin(\psi_b)}, \\
	[\mathbf{a}_q(\psi'_{a,q}, \psi'_{e,q}, \mathcal{R}_q)]_m  &= e^{j \frac{2\pi}{\lambda} \left(x_{q,m} \sin\psi'_{a,q} \cos\psi'_{e,q} + y_{q,m} \sin\psi'_{e,q}\right)},
    \label{6}
\end{align}
\endgroup
where $[ \cdot ]_n$ represents the $n$-th entry of the vector. Notation of $\lambda$ indicates the wavelength, whilst $\psi_b$ denotes the angle-of-departure (AoD) from the BS towards MM-RIS. Symbols of $\psi'_{a,q}$ and $\psi'_{e,q}$ denote the azimuth and elevation angles-of-arrival (AoAs) at MM-RIS $q$, respectively. Following \eqref{4}, the channel between the $q$-th MM-RIS and user $U_{kj}$ is defined as $\mathbf{g}_{q,kj} \in \mathbb{C}^{M\times 1}$, where the distance between them is denoted as $d_{q, kj} = \| \mathbf{w}_q - \mathbf{w}_{kj} \|$. The LoS channel of $\mathbf{g}_{q,kj}$ can be expressed as $\mathbf{g}_{q,kj}^{\text{LoS}} = \mathbf{a}''_q (\psi''_{a,q}, \psi''_{e,q}, \mathcal{R}_q)$,
where $\psi''_{a,q}$ and $\psi''_{e,q}$ respectively denote the azimuth and elevation AoDs from the MM-RIS towards user $U_{kj}$, and the corresponding elementwise steering vector is defined as
\begin{align}
    [\mathbf{a}''_q (\psi''_{a,q}, \psi''_{e,q}, \mathcal{R}_q)]_m  = e^{j \frac{2\pi}{\lambda} \left(x_{q,m} \sin\psi''_{a,q} \cos\psi''_{e,q} + y_{q,m} \sin\psi''_{e,q}\right)}.
    \label{eq:fires_user_channel}
\end{align}
As for the NLoS components related to MM-RIS-based channel, i.e., $\mathbf{H}_q^{\text{NLoS}}$ and $\mathbf{g}_{q,kj}^{\text{NLoS}}$, Jake's model is adopted to model the ME-induced spatial correlation \cite{my_lim}, i.e., $[\mathbf{J}_{q}]_{i,j} = J_0 \left( \frac{2\pi \Delta d_{i,j}}{\lambda} \right)$, where $J_0$ is the Bessel function of the zero order and $\Delta d_{i,j}$ denotes the distances between the $i$-th and the $j$-th MM-RIS elements. The spatial correlation can be further written as $\mathbf{J}_{q}= \mathbf{U}_q \boldsymbol{\Lambda}_q \mathbf{U}_q^{\mathrm{H}}$, where $\mathbf{U}_q$ is the unitary matrix with eigenvectors and $\boldsymbol{\Lambda}_q$ indicates the diagonal eigenvalue matrix. Accordingly, the expressions of the NLoS channels of $\mathbf{H}_q^{\text{NLoS}}$ and of $\mathbf{g}_{q,kj}^{\text{NLoS}}$ can be respectively derived by $\mathbf{H}_q^{\text{NLoS}} = \mathbf{\Lambda}_q^{1/2} \mathbf{U}_q^{\mathrm{H}} \bar{\mathbf{H}}_q$ 
and
$\mathbf{g}_{q,kj}^{\text{NLoS}} = \mathbf{\Lambda}_{q,kj}^{1/2} \mathbf{U}_{q,kj}^{\mathrm{H}} \bar{\mathbf{g}}_{q,kj}$
where $\bar{\mathbf{H}}_q$ and $\bar{\mathbf{g}}_{q,kj}$ represent the uncorrelated small-scale fading components, which are modeled as complex Gaussian distribution with zero mean and unit variance. The direct link from the BS to user $j$ at direction $k$ is denoted by $\mathbf{h}_{kj} \in \mathbb{C}^{N \times1 }$, associated with their distance of $d_{kj}$. While, the pertinent parameters of $\mathbf{h}_{kj}$ follow \eqref{4} but in a vector form, where the LoS component $ \mathbf{h}_{kj}^{\text{LoS}}$ is expressed as $\mathbf{h}^{\text{LoS}}_{kj} = \left[
1, e^{j\frac{2\pi}{\lambda}d_B\sin\varphi_{b}}, \ldots, e^{j\frac{2\pi}{\lambda}(N-1)d_B\sin\varphi_{b} } \right]^{\mathrm{T}}$, where $d_B$ denotes the element separation of BS. Notation of $\varphi_{b}$ represents the AoD of the BS towards user $U_{kj}$. Note that the NLoS part is defined as $\mathbf{h}^{\text{NLoS}}_{kj}$ following the Rayleigh fading. Consequently, the effective channel between the BS and user $U_{kj}$ is denoted as ${\mathbf{g}}_{kj} = \mathbf{h}_{kj}^{\mathrm{H}} + \sum_{q\in\mathcal{Q}} \mathbf{g}_{q,kj}^{\mathrm{H}} \mathbf{\Theta}_{q}^k \mathbf{H}_{q}$.

\subsection{Signal Model}

In the hybrid RSMA-NOMA downlink transmission, users are first partitioned into multiple groups to efficiently share the limited spectrum resources. Specifically, a channel-aware user grouping procedure is performed prior to resource optimization. Users are first classified according to whether they are located on the reflection or transmission side of the MM-RIS. Users on each side are then further partitioned into multiple groups according to their effective channel directions. For users $u$ and $v$, their channel correlation is defined as $\rho_{u,v} = \frac{ |\mathbf g_u\mathbf g_v^{\rm H}| }{ \|\mathbf g_u\|_2\|\mathbf g_v\|_2 }$. Users exhibiting high channel correlation are preferentially assigned to the same group to facilitate beamforming. In each group at direction $k$, data stream is split into common and private streams. The common stream decodes messages for all users, whereas the private stream provides individual data content to each user. Moreover, in private message, NOMA mechanism is employed to further superimpose private signals for enhancing the frequency utilizations. Therefore, the BS transmits signals containing the common stream $s_k^c$ and the private stream $s_{kj}$ for user $U_{kj}$. The signal received by user $U_{kj}$ can be given by
\begingroup
\allowdisplaybreaks
\begin{align} \label{r_signal}
& y_{kj} = 
	\underbrace{\mathbf{g}_{kj} \mathbf{f}_k^c \sqrt{p_{k}^c} s_{k}^c }_{\text{Common Stream}}
	+ \underbrace{\sum_{l\in\mathcal{K} \backslash k } \mathbf{g}_{kj} \mathbf{f}_{l}^c \sqrt{p_{l}^c} s_{l}^c }_{\text{Common Interf.}} 
	+ \underbrace{\mathbf{g}_{kj} \mathbf{f}_k^p \sqrt{p_{kj}} s_{kj}}_{\text{Private Stream}} \notag \\
	& + \underbrace{\mathbf{g}_{kj} \mathbf{f}_k^p \sum_{i \in \mathcal{J}_k \setminus j} \sqrt{p_{ki}} s_{ki}}_{\text{Private Interf.}}
	+ \underbrace{\sum_{l \in \mathcal{K} \setminus k }\mathbf{g}_{kj} \mathbf{f}_{l}^p
\sum_{i \in \mathcal{J}_{l}} \sqrt{p_{li}} s_{li} }_{\text{Inter-Group Interf.}} \notag\\
	& + \underbrace{\sum_{q \in \mathcal{Q}} \mathbf{g}^{\mathrm{H}}_{q,kj}\mathbf{\Theta}_{q}^k \mathbf{n}_q}_{\text{MM-RIS-Induced Noise}} + \underbrace{{n}_{kj}}_{\text{Background Noise}},   
\end{align}
\endgroup
where $\mathbf{f}_k^c$ and $\mathbf{f}_k^p$ represent the transmit beamforming vectors for common and private streams of the BS at direction 
$k$, respectively, satisfying the power constraint of $\sum_{k \in \mathcal{K}} \left( \|\mathbf{f}_k^c\|^2  + \|\mathbf{f}_k^p\|^2 \right) \leq P_{\text{BS}}^{\text{max}}$, where $P_{\text{BS}}^{\text{max}}$ denotes the transmit power budge of the BS. Moreover, $p_{k}^c$ and $p_{kj}$ are defined as the power allocation factor of common message at direction $k$ and that of private messages for user $U_{kj}$, whereas $p_{k}^c+\sum_{j \in \mathcal{J}_k} p_{kj} = 1$. Notation of $\mathbf{n}_{q} \sim \mathcal{CN}(0, \sigma_s^2 \mathbf{I}_M)$ denotes the amplification noise induced by the MM-RIS associated with its element noise power $\sigma_s^2$. Notation of $n_{kj}$ is background noise power of user $U_{kj}$, where its power is denoted as $\sigma_{u}^2$. Note that $\mathbf{f}_k^p$ is a group-level private precoder shared by all users within group $k$. Specifically, users with highly correlated effective channel directions are assigned to the same group and spatially served by $\mathbf f_k^p$, whereas their private messages are superimposed in the power domain using the user-specific power coefficients $p_{kj}$. Compared to assigning an independent precoder to every user, this group level design reduces the number of beamforming variables while retaining user differentiation through power-domain multiplexing.

The signal-to-interference-plus-noise ratio (SINR) of the common stream for direction $k$ is given by \eqref{sinr_c} as shown at top of next page.
\begin{figure*}
\begin{align} \label{sinr_c}
    \gamma_{kj}^c = 
    \frac{|\mathbf{g}_{kj} \mathbf{f}_k^c|^2 p_{k}^c}{
    \sum\limits_{l\in\mathcal{K} \backslash k } |\mathbf{g}_{kj} \mathbf{f}_{l}^c |^2  p_{l}^c
    +
    | \mathbf{g}_{kj} \mathbf{f}_k^p |^2 \sum\limits_{i\in\mathcal{J}_k} p_{ki}
    +
    \sum\limits_{l \in \mathcal{K} \setminus k } | \mathbf{g}_{kj} \mathbf{f}_{l}^p |^2
\sum\limits_{i \in \mathcal{J}_{l}} p_{li}
	+
	\sum\limits_{q \in \mathcal{Q}} \sigma_s^2 
\|\mathbf{g}_{q,kj}^{\mathrm{H}} \mathbf{\Theta}_{q}^k\|^2
	+ \sigma_u^2
	}
	.
\end{align}
\hrulefill
\end{figure*}
The corresponding common message capacity can be given by $C_{kj} = \log_2( 1 + \gamma_{kj}^c)$. A two-stage decoding leveraging SIC for common and private signals is conceived: First-stage SIC is conducted for common-private separation, whereas the second-stage SIC is performed for NOMA private message decoding. Assume that the users in direction $k$ are ranked in an ascending order according to the equivalent
combined channel gains, associated with the conditions of 
$\left|\mathbf g_{k1}\mathbf f_k^p\right|^2 
\leq \left|\mathbf g_{k2}\mathbf f_k^p\right|^2 \leq 
\cdots 
\leq \left|\mathbf g_{kJ_k}\mathbf f_k^p\right|^2$. Accordingly, a stronger user $U_{kl}$ can decode and cancel the private messages intended for weaker users $U_{kj}$ with $j<l$ before decoding its own private message. To explicitly characterize the SIC decodability, let $\gamma_{kl\rightarrow kj}^{p}$ denote the SINR at user $U_{kl}$ when decoding the private message intended for user $U_{kj}$, where $l\geq j$, which can be given by \eqref{SIC_decode} at top of next page.
\begin{figure*}
\begin{align} \label{SIC_decode}
\gamma_{kl\rightarrow kj}^{p}
=
\frac{\left|\mathbf{g}_{kl}\mathbf{f}_k^p\right|^2 p_{kj}}{
    \sum\limits_{k'\in\mathcal{K}\setminus k}
    | \mathbf{g}_{kl}\mathbf{f}_{k'}^c |^2p_{k'}^c
    +
    | \mathbf{g}_{kl}\mathbf{f}_k^p |^2
    \sum\limits_{i\in\mathcal{J}_k, i>j} p_{ki}
    +
    \sum\limits_{k'\in\mathcal{K}\setminus k}
    |\mathbf{g}_{kl} \mathbf{f}_{k'}^p |^2
    \sum\limits_{i\in\mathcal{J}_{k'}}p_{k'i}
    +
    \sum\limits_{q\in\mathcal Q}
    \sigma_s^2
    \|\mathbf{g}_{q,kl}^{\rm H} \mathbf{\Theta}_q^k \|^2
    +
    \sigma_u^2}.
\end{align}
\hrulefill
\end{figure*}
To guarantee successful SIC, each stronger user decoding the message of $U_{kj}$ is required to achieve an SINR no smaller than that achieved by the intended user, i.e.,
\begin{align}
\gamma_{kl\rightarrow kj}^{p}
\geq
\gamma_{kj\rightarrow kj}^{p},
\quad
\forall l\in\mathcal{L}_{kj}\setminus j, j\in\mathcal{J}_k, k\in\mathcal{K},
\label{con_SIC}
\end{align}
where $\mathcal{L}_{kj} = \{j,j+1,...,J_k\}$. Accordingly, the SINR for user $U_{kj}$ to decode its own private message is obtained as the special case of $l=j$, i.e., $\gamma_{kj}^{p} = \gamma_{kj\rightarrow kj}^{p}$. Therefore, the achievable achievable private rate for $U_{kj}$ is given by $R_{kj}^p = \log_2(  1 + \gamma_{kj}^p)$. To ensure that all users can successfully decode the common data stream, the minimum achievable rate for decoding the common stream among all users must exceed the sum of the allocated common rates, which is expressed as $\sum_{j \in \mathcal{J}_k} R_{kj}^c \leq \min_{j\in \mathcal{J}_k} C_{kj}$. Given the allocated common message rate $R_{kj}^c$ and achievable private message rate $R_{kj}^p$, the total rate of user $U_{kj}$ in hybrid RSMA-NOMA is represented by $r_{kj} = R_{kj}^c+R_{kj}^p$. During the repositioning of the MEs, the channel undergoes rapid time-varying fluctuations, potentially resulting in severe degradation of the received signal quality. Performing data transmission during this interval may therefore significantly impair the system performance. To mitigate this issue, we assume that only control signaling is retained between the BS and all users, whereas data transmission is temporarily halted during the ME movement phase. Accordingly, each transmission block of duration $T$ is partitioned into two phases, namely the ME movement phase and the communication phase. The effective sum rate of user $U_{kj}$ can be derived by \cite{27}
\begingroup
\allowdisplaybreaks
\begin{align} \label{eff_rate}
	R_{kj} = \left(1-\frac{T_{\text{mov}}}{T} \right) \cdot r_{kj},
\end{align}
\endgroup
where $T_{\text{mov}}= \max_{q \in \mathcal{Q}, m\in \mathcal{M}}  \frac{\| \mathbf{r}_{q,m} - \mathbf{r}_{q,m}^0 \|}{v_{q,m}}$ represents the worst-case consumed time corresponding to the maximum moving distance among the MEs. It is worth noticing that the term $\left(1-\frac{T_{\text{mov}}}{T} \right)$ in \eqref{eff_rate} implies the ratio of the remaining time for data transmissions. Notations of $\mathbf{r}_{q,m}^0$ and $\mathbf{r}_{q,m}$ are the element position before and after the movement for the $m$-th element of the $q$-th MM-RIS. Note that $v_{q,m}$ indicates the velocity of each ME $m$. Note that the MM-RIS panel deployment and element-level ME repositioning are considered to operate on different timescales. Specifically, the MM-RIS panel positions are slow-timescale variables that are updated according to relatively slowly varying network conditions, such as user spatial distributions and large scale channel characteristics. In contrast, the MEs are locally repositioned within the surface region on a comparatively faster timescale to adapt to spatial channel variations. Accordingly, the movement-overhead term in the considered EE model accounts for the element-level ME reconfiguration, whereas the transient movement energy and delay associated with relocating an entire MM-RIS panel are not included in each communication optimization cycle.

To elaborate a little further, the proposed MM-RIS architecture can be extended to a general MIMO scenario where each user is equipped with multiple receive antennas. In this case, the equivalent effective channel between the BS and users will be generalized to a matrix-based channel, and additional receive beamforming should be introduced at users. Accordingly, the received signal and SINR can be reformulated by incorporating the receive beamforming, whilst the MM-RIS configurations remain applicable. The corresponding receive beamforming variables can be incorporated into the continuous action space of the proposed learning framework.

\subsection{EH and Power Consumption Model}
Here, we define the EH coefficient matrix for the $m$-th element of the $q$-th MM-RIS as $\mathbf{T}_{q,m}=\operatorname{diag} \left( \left[ {0, \dots, 0}, 
1 - \alpha_{q,m}, {0, \dots, 0}\right] \right)$. Therefore, the RF power received by the $m$-th element of the $q$-th MM-RIS is given by $P_{q,m}^{\text{RF}}=\mathbb{E} \left( \left\| \mathbf{T}_{q,m} \left( \mathbf{H}_{q} \sum_{k \in \mathcal{K}} (\mathbf{f}_k^c + \mathbf{f}_k^p)+ \mathbf{n}_{q,m} \right) \right\|^2 \right)$, where $\mathbf{n}_{q,m}$ is the amplified noise introduced by the MM-RIS hardware with power $\sigma_s^2$. To capture the dynamics of the RF energy conversion efficiency for different input power levels, a nonlinear EH model is adopted \cite{14, 31}. Accordingly, the total harvested power of the $m$-th element of the $q$-th MM-RIS is expressed as $P_{q,m}^{\text{EH}} = \frac{\Upsilon_{q,m} - Z\Omega}{1 - \Omega}$, where $\Upsilon_{q,m}=Z / (1 + e^{-\varrho_1 \left( P_{q,m}^{\text{RF}} - \varrho_2 \right)} )$ is a logistic function, and $Z>0$ is a constant determining
the maximum harvested power. Constant $\Omega=\frac{1}{1+e^{\varrho_1 \varrho_2}}$ ensures a zero-input/zero-output response in H mode in conjucation with constants $\varrho_1>0$ and $\varrho_2 > 0$ capturing the joint effects of circuit sensitivity limitations and current leakage. 

 Moreover, the power for controlling the MM-RIS comes from the total number of PIN diodes required for the quantization levels \cite{37n38} of EH ratio, amplitude and phase-shifts defined as $L_{\alpha}$, $L_{\beta}$, and $L_{\theta}$, respectively. Accordingly, the total number of PIN diodes required is given by $\log_2 L_{\alpha} + \log_2 L_{\beta} + 2 \log_2 L_{\theta}$. The effective total power consumption of the $q$-th MM-RIS considers the ME moving stage and communication stage, which is represented by
 \begingroup
\allowdisplaybreaks
\begin{align} \label{power1}
	P_{q}^{\text{con}}& = 
	\frac{T_{\text{mov}}}{T} \cdot \sum_{m\in\mathcal{M}} v_{q,m} E_{q}^{\text{mov}}
	 + 
	\left(1-\frac{T_{\text{mov}}}{T} \right) \cdot  \left[ \frac{1}{2} \left( \log_{2} L_{\alpha} \right. \right. \notag \\
		& \left. \left. + \log_{2} L_{\beta} + 2\log_{2} L_{\theta} \right) \cdot M \cdot P_{\text{pin}} + P_{C} + \xi P_{O,q} \right],
\end{align}
\endgroup
where $P_C$ denotes the power consumed by the RF-to-DC power conversion circuit, and $P_{\text{pin}}$ represents the power consumption per PIN diode. Additionally, $E_{q}^{\text{mov}}$ represents the energy consumption per unit distance due to the movement of MEs. Notation of $\xi$ is the inverse of the amplifier efficiency. The output power of the MM-RIS is given by $P_{O,q} = \sum_{k \in \mathcal{K}} \left( \sum_{k' \in \mathcal{K}} \|\boldsymbol{\Theta}_q^k \mathbf{H}_q (\mathbf{f}_{k'}^c + \mathbf{f}_{k'}^p) \|^2 + \sigma_s^2 \|\boldsymbol{\Theta}_q^k\|_F^2 \right)$, where $\|\cdot\|_F$ denotes the Frobenius norm. In order to achieve the self-sustainability of each MM-RIS, the total power consumed should not exceed the harvested power by elements, given by $P_{q}^{\text{con}} \leq \sum_{m \in \mathcal{M}} P_{q,m}^{\text{EH}}$. Therefore, the total power consumption is
\begin{align} \label{power2}
	 P_{\text{tot}} = \sum_{k\in\mathcal{K}} (\|\mathbf{f}_{k}^c \|^2 \!+\! \| \mathbf{f}_{k}^p \|^2) + \sum_{q\in\mathcal{Q}} \left( P^{\text{con}}_q \!-\! \sum_{m \in \mathcal{M}} P_{q,m}^{\text{EH}} \right),
\end{align}
where the first term indicates the BS beamforming power.

\subsection{Problem Formulation}
The objective is to maximize the system EE while guaranteeing the constraints of minimum user rate requirement, MM-RIS configurations and self-sustainability as well as the operational power limitation, which is formulated as follows:
\begingroup
\allowdisplaybreaks
\begin{subequations} \label{prob}
    \begin{align}
    &\max_{\substack{\mathbf{f}_{k}^c, \mathbf{f}_{k}^p, p_{k}^c, p_{kj}, R_{kj}^c, \\
    \alpha_{q,m}, \beta_{q,m}^k, \theta_{q,m}^k, \mathbf{w}_q,  \mathbf{r}_{q,m}}}
     \quad \sum_{k \in \mathcal{K}} \sum_{j \in \mathcal{J}_k} \frac{R_{kj}}{P_{\text{tot}}} \label{prob_25a} \\[4pt]
    \text{s.t.} \quad 
    & \eqref{coupl}, \eqref{con_SIC}, \notag \\
    & \mathbf{\Theta}_q \in \mathcal{R}_{\mathbf{\Theta}}, \qquad\forall q \in \mathcal{Q}, \label{prob_25b} \\[2pt]
    &R_{kj} \geq R^{\text{min}}_{kj}, \qquad \forall j \in \mathcal{J},\, \forall k \in \mathcal{K}, \label{prob_25c} \\
    & P_{q}^{\text{con}} \leq \sum_{m \in \mathcal{M}} P_{q,m}^{\text{EH}}, \qquad\qquad \forall q \in \mathcal{Q},  \label{prob_25d} \\
    &\sum_{k \in \mathcal{K}} (\| \mathbf{f}_k^c \|^2+ \| \mathbf{f}_{k}^p \|^2) \leq P_{\text{BS}}^{\text{max}}, \label{prob_25e} \\
    &p_{k}^c+\sum_{j \in \mathcal{J}_k} p_{kj} = 1, \qquad\ \forall k \in \mathcal{K}, \label{prob_25f} \\
    & \sum_{j \in \mathcal{J}_k} R_{kj}^c \leq \min_{j\in \mathcal{J}_k} C_{kj}, \qquad\ \forall k \in \mathcal{K}, \label{prob_25g} \\[2pt]
    &\|\mathbf{r}_{q,m} - \mathbf{r}_{q,m'}\| \geq D, \quad \forall m \neq m', \forall m,m'\in\mathcal{M}, \forall q \in \mathcal{Q}, \label{prob_25h}\\
    &\mathbf{w}_q \in \mathcal{W}, \qquad \forall q \in \mathcal{Q}. \label{prob_25i}
    \end{align}
\end{subequations}
\endgroup
Constraints \eqref{prob_25b} defines the constraint set of MM-RIS coefficients as $\mathcal{R}_{\mathbf{\Theta}}= \{ \alpha_{q,m} \in \{0,1\}, \beta^k_{q,m} \in [0, \beta^k_{\text{max}}], \theta_{q,m}^k \in [0, 2\pi) | \forall q\in\mathcal{Q}, k\in\mathcal{K}, m\in\mathcal{M} \}$. Constraint \eqref{prob_25c} guarantees the minimum rate requirement of each user as $R_{kj}^{\text{min}}$. Constraint \eqref{prob_25d} ensures the self-sustainability of MM-RIS, i.e., power consumption of the MM-RIS cannot exceed its harvested power. Constraint \eqref{prob_25e} guarantees that transmit power should be smaller than $P^{\text{max}}_{\text{BS}}$. Constraint \eqref{prob_25f} and \eqref{prob_25g} represent the hybrid NOMA-RSMA power allocation factor and rate allocation limitation, respectively. Constraint \eqref{prob_25h} ensures that the distance between two MEs must be greater than the minimum inter-element spacing $D$. Constraint \eqref{prob_25i} defines the deployment boundary of MM-RISs. Due to the non-convexity and non-linearity of the problem, as well as the mixed high-dimensional discrete and continuous variables under highly dynamic environments, solving \eqref{prob} is highly challenging. Hence, we propose an advanced DRL-based approach as detailed in the following section.

\section{Scaling-Law Analysis}
We provide theoretical insights into the scalability of the proposed MM-RIS architecture with respect to the number of MEs $M$. We characterize the scaling behaviors of the desired signal, MM-RIS-induced amplification noise, achievable rate, and EE. Assume that: 1) the element-wise channel coefficients have finite and non-zero first and second moments; 2) the MM-RIS amplification coefficients are bounded as $0\leq \beta_{q,m}^{k} \leq \beta^k_{\text{max}} < \infty$; 3) a non-vanishing fraction of the MM-RIS elements operates in the signal mode as $M$ increases; and 4) the phase-shifts of MM-RISs are configured to coherently combine the desired cascaded channel; 5) the MM-RIS aperture and ME movement velocities are bounded as $M$ increases.

\begin{proposition} \label{prop1}
For each BS-MM-RIS-user cascaded link, under coherent phase configuration, bounded amplification coefficients, and finite channel moments, the desired received signal power scales as $P_{\mathrm{sig}}(M)=\mathcal{O}(M^2)$, whereas the MM-RIS-induced amplification noise power scales as $P_{\mathrm{AN}}(M)=\mathcal{O}(M)$. When the residual interference is bounded, i.e., $P_{\mathrm{I}}(M) = \mathcal{O}(1)$, the SINR reduces to
\begin{align} \label{c1c2}
\gamma(M) = \mathcal{O}\left( \frac{M^2}{c_1M+c_2} \right) =
\begin{cases}
\mathcal{O}(M^2), & c_1 M \ll c_2, \\
\mathcal{O}(M), & c_1 M \gg c_2,
\end{cases}
\end{align}
where $c_1>0$ represents the equivalent MM-RIS amplification noise coefficient, whilst $c_2>0$ collects the receiver noise and disturbance terms. Accordingly, the communication-phase achievable rate satisfies $r(M)=\log_2(1+\gamma(M))
=\mathcal{O}(\log M)$. Under bounded ME movement, the movement time overhead preserves the asymptotic rate scaling, yielding $R(M)=\mathcal{O}(\log M)$.
\end{proposition}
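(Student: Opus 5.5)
We fix one cascaded link BS--MM-RIS $q$--user $U_{kj}$ and write the cascaded contribution to $\mathbf{g}_{kj}\mathbf{f}$ elementwise as
\begin{align*}
\mathbf{g}_{q,kj}^{\mathrm{H}}\mathbf{\Theta}_q^k\mathbf{H}_q\mathbf{f}=\sum_{m\in\mathcal{M}}\alpha_{q,m}\sqrt{\beta_{q,m}^k}\,e^{j\theta_{q,m}^k}\,[\mathbf{g}_{q,kj}]_m^{*}[\mathbf{H}_q\mathbf{f}]_m .
\end{align*}
Under assumption 4) we choose $\theta_{q,m}^k=-\arg\big([\mathbf{g}_{q,kj}]_m^{*}[\mathbf{H}_q\mathbf{f}]_m\big)$. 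The sum then becomes $\sum_m\alpha_{q,m}\sqrt{\beta_{q,m}^k}\,|[\mathbf{g}_{q,kj}]_m||[\mathbf{H}_q\mathbf{f}]_m|$.

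\textbf{Upper bound on the signal.} Assumptions 1) and 2) give each term a mean bounded by $\sqrt{\beta^k_{\max}}\,\mathbb{E}|[\mathbf{g}]_m||[\mathbf{H}\mathbf{f}]_m|<\infty$. By Cauchy--Schwarz, the second moment of the sum is at most $M\sum_m\beta^k_{\max}\mathbb{E}\big(|[\mathbf{g}]_m|^2|[\mathbf{H}\mathbf{f}]_m|^2\big)=\mathcal{O}(M^2)$. Adding the direct link $\mathbf{h}_{kj}$, which is $\mathcal{O}(1)$, and the other panels, of which there are finitely many ($Q$ fixed), keeps $P_{\mathrm{sig}}(M)=\mathcal{O}(M^2)$.

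\textbf{Lower bound, so the order is tight.} Assumption 3) makes the number of S-mode elements at least $\eta M$ for some $\eta>0$. The first moment of the coherent sum is then $\Theta(M)$, so $P_{\mathrm{sig}}\ge(\mathbb{E}\,\cdot)^2=\Omega(M^2)$. We will state this explicitly, since the two regimes in \eqref{c1c2} require the order to be tight and not merely an upper bound.

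\textbf{Amplification noise.} The noise term is $\sigma_s^2\|\mathbf{g}_{q,kj}^{\mathrm{H}}\mathbf{\Theta}_q^k\|^2=\sigma_s^2\sum_m\alpha_{q,m}\beta_{q,m}^k|[\mathbf{g}_{q,kj}]_m|^2$. This is a sum of $M$ nonnegative terms with bounded means and no coherent combining, because $\mathbf{n}_q$ is white, so $P_{\mathrm{AN}}(M)=\mathcal{O}(M)$. Assumption 3) again makes it $\Theta(M)$, which defines $c_1=\sigma_s^2\,\eta\,\bar\beta\,\mathbb{E}|[\mathbf{g}]_m|^2$ (with $\bar\beta$ the average amplification coefficient over S-mode elements). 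The ME positions enter only through the unit-modulus steering phases in \eqref{6} and through the Jakes correlation. The correlation changes second moments by bounded factors, because the spacing constraint \eqref{prob_25h} keeps the eigenvalues of $\mathbf{J}_q$ bounded on a bounded aperture. I expect this to be the main obstacle. A fixed aperture $A$ with growing $M$ and minimum spacing $D$ is incompatible for large $M$. I would therefore interpret assumption 5) as allowing the aperture to scale so that the spacing remains feasible. Failing that, I would treat the Jakes correlation only through the trace identity $\mathrm{tr}(\mathbf{J}_q)=M$, which preserves both the first-moment argument and the $\mathcal{O}(M)$ noise sum.

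\textbf{SINR, rate and effective rate.} With $P_{\mathrm{I}}(M)=\mathcal{O}(1)$ as assumed, the denominator of \eqref{sinr_c} or \eqref{SIC_decode} is $c_1M+c_2$ with $c_2=\sigma_u^2+\mathcal{O}(1)$. This gives $\gamma(M)=\Theta(M^2/(c_1M+c_2))$, and the two cases follow by comparing $c_1M$ with $c_2$. Since $\gamma(M)$ is polynomial in $M$, $\log_2(1+\gamma(M))=\mathcal{O}(\log M)$ in both regimes. For the effective rate \eqref{eff_rate}, assumption 5) gives $\|\mathbf{r}_{q,m}-\mathbf{r}_{q,m}^0\|\le\mathrm{diam}(A)$ and $v_{q,m}\ge v_{\min}>0$. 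Hence $T_{\text{mov}}\le \mathrm{diam}(A)/v_{\min}$ independently of $M$, so $1-T_{\text{mov}}/T$ is a constant in $(0,1]$ (assuming $T>T_{\text{mov}}$) and $R(M)=\mathcal{O}(\log M)$.
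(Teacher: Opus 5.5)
Your proposal is correct at the paper's level of rigor and follows the same route as the paper's proof: an elementwise cascaded sum with coherent phases giving an $\mathcal{O}(M)$ amplitude, an incoherent $\mathcal{O}(M)$ noise sum, the bounded-interference SINR regimes, and $T_{\text{mov}}$ bounded via a bounded aperture and velocities bounded below. Your Cauchy--Schwarz/Jensen two-sided bounds only make explicit the tightness the paper asserts through its ``bounded non-zero magnitude'' remark, and like that remark they implicitly need $\beta_{q,m}^k$ bounded away from zero on the S-mode elements.

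Drop your first reading of assumption 5) with a growing aperture. It contradicts the bounded-aperture premise that your own step $T_{\text{mov}}\le\mathrm{diam}(A)/v_{\min}$ relies on. Instead, keep the aperture fixed and invoke assumption 1) for uniformly bounded element-wise moments, as the paper does; the paper never addresses the spacing-feasibility tension you correctly spotted.
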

\begin{proof}
The cascaded channel through the $q$-th MM-RIS can be expressed as $\mathbf{g}_{q,kj}^{\mathrm{H}} \mathbf{\Theta}_q^{k}\mathbf{H}_q \mathbf{f} = \sum_{m=1}^{M} \alpha_{q,m} \sqrt{\beta_{q,m}^{k}} g_{q,kj,m}^{*} \mathbf{h}_{q,m}^{\rm T} \mathbf{f} e^{j\theta_{q,m}^{k}}$, where $\mathbf{f}$ indicates any beamforming. Under coherent assumption, $\theta_{q,m}^{k}$ compensates for the element-wise phase-shifts of the cascaded channel such that the desired components are coherently combined. Hence, $| \mathbf{g}_{q,kj}^{\mathrm{H}} \mathbf{\Theta}_q^{k} \mathbf{H}_q \mathbf{f} |  \approx \sum_{m=1}^{M} \alpha_{q,m} \sqrt{\beta_{q,m}^{k}}
|g_{q,kj,m}| |\mathbf h_{q,m}^{\rm T}\mathbf{f} |\triangleq f_1$. Under finite moment assumptions, bounded amplification
coefficients, and a non-vanishing fraction of elements in the signal mode, each element-wise contribution in $f_1$ has a bounded non-zero magnitude. Therefore, the coherently combined amplitude increases linearly with $M$, i.e., $f_1=\mathcal{O}(M)$, which yields $P_{\mathrm{sig}}(M)=\mathcal{O}(M^2)$. On the other hand, the MM-RIS-induced amplification noise associated with the $q$-th MM-RIS is proportional to $\sigma_s^2
\| \mathbf{g}_{q,kj}^{\rm H} \mathbf{\Theta}_q^{k} \|^2 = \sigma_s^2 \sum_{m=1}^{M} \alpha_{q,m}^{2} \beta_{q,m}^{k} |g_{q,kj,m}|^2$. Since $\beta_{q,m}^{k}$ is bounded and the element-wise channel coefficients have finite moments, each term in the above summation has bounded power. Therefore, the accumulated amplification noise power grows linearly with $M$, yielding $P_{\mathrm{AN}}(M)=\mathcal{O}(M)$. Let $P_{\mathrm{I}}(M)$ denote the aggregate residual multi-user interference power after transmit beamforming and hybrid RSMA-NOMA decoding. Combining the desired signal power, residual interference, and total noise terms yields $\gamma(M) = \mathcal{O}\left( \frac{M^2} {P_{\mathrm{I}}(M)+c_1M+c_2} \right)$. When the residual interference is bounded, i.e., $P_{\mathrm I}(M)=\mathcal{O}(1)$, we have $\gamma(M) = \mathcal{O}\left( \frac{M^2}{c_1M+c_2} \right)$. When $c_1 M \ll c_2$, the denominator is dominated by the constant, and hence $\gamma(M) = \mathcal{O}(M^2)$ in \eqref{c1c2}. In contrast, when $c_1 M \gg c_2$, the accumulated MM-RIS amplification noise dominates the denominator, yielding $\gamma(M) = \mathcal{O}(M)$ in \eqref{c1c2}. Since the communication-stage achievable rate is $r(M)=\log_2\left(1+\gamma(M)\right)$, both $\gamma(M)=\mathcal{O}(M^2)$ and
$\gamma(M)=\mathcal{O}(M)$ lead to $r(M)=\mathcal{O}(\log M)$. According to \eqref{eff_rate}, the effective rate accounting for ME repositioning is $R(M) = \left(1-\frac{T_{\rm mov}(M)}{T}\right)r(M) = \tau(M)r(M)$. Since the MM-RIS movement region is bounded and the ME velocities are lower bounded by a positive constant, $T_{\rm mov}(M)$ remains bounded as $M$ increases, i.e., $\tau(M)=\mathcal{O}(1)$. Consequently, $R(M)=\mathcal{O}(\log M)$. This completes the proof.
\end{proof}

\begin{corollary} \label{cor1}
Let the aggregate residual interference scale as $P_{\mathrm I}(M)=\mathcal{O}(M^{\rho})$, where $ 0\leq \rho \leq 2$, and let the effective total power consumption scale as $P_{\mathrm{tot}}(M)=\mathcal{O}(M^{\delta})$, where $0\leq \delta \leq 1$ characterizes the residual element-dependent power overhead after EH. For $\rho < 2$, we have SINR $\gamma(M) = \mathcal{O}\left( M^{2-\max\{\rho,1\}} \right)$, whereas $\rho=2$ yields an interference-limited regime with $\gamma(M)=\mathcal{O}(1)$. Then, the system EE scales as
\begin{align}
\eta_{\mathrm{EE}}(M)
=
\frac{R(M)}{P_{\mathrm{tot}}(M)}
=
\begin{cases}
\displaystyle
\mathcal{O}\left(
\frac{\log M}{M^{\delta}}
\right),
& 0\leq\rho<2,\\[3mm]
\displaystyle
\mathcal{O}\left(
\frac{1}{M^{\delta}}
\right),
& \rho=2.
\end{cases}
\label{EE_scaling}
\end{align}
Hence, when $\delta>0$, the EE asymptotically approaches zero, i.e., $\lim_{M\rightarrow\infty}
\eta_{\mathrm{EE}}(M)=0$.
For the uncompensated element-dependent overhead, i.e., $\delta=1$, the EE scales as $\mathcal{O}(\log M/M)$ for $\rho<2$ and $\mathcal{O}(1/M)$ for $\rho=2$.
\end{corollary}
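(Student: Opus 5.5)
The plan is to reuse the decomposition already established in the proof of Proposition~\ref{prop1}. That proof gives $P_{\mathrm{sig}}(M)=\mathcal{O}(M^2)$ and $P_{\mathrm{AN}}(M)=\mathcal{O}(M)$, and the SINR takes the generic form
\begin{align*}
\gamma(M)=\mathcal{O}\left(\frac{M^2}{P_{\mathrm I}(M)+c_1M+c_2}\right).
\end{align*}
Substituting $P_{\mathrm I}(M)=\mathcal{O}(M^{\rho})$, the denominator is a sum of three terms of orders $M^{\rho}$, $M$ and $1$. Its growth order is therefore $M^{\max\{\rho,1,0\}}=M^{\max\{\rho,1\}}$, since $c_1>0$ guarantees that the linear noise term is always present. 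This immediately yields $\gamma(M)=\mathcal{O}(M^{2-\max\{\rho,1\}})$.

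I would then split into cases. For $0\le\rho\le1$ the exponent is $1$. For $1<\rho<2$ it is $2-\rho\in(0,1)$, still strictly positive. For $\rho=2$ the numerator and denominator grow at the same rate, so $\gamma(M)=\mathcal{O}(1)$, which is the interference-limited regime.

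Next, I would pass to the rate. If $\gamma(M)=\mathcal{O}(M^{a})$ with $a>0$, then $\log_2(1+\gamma)\le \log_2(1+CM^{a})=a\log_2 M+\mathcal{O}(1)$, so $r(M)=\mathcal{O}(\log M)$. In the case $\rho=2$, a bounded SINR gives $r(M)=\mathcal{O}(1)$. The movement factor $\tau(M)=1-T_{\rm mov}(M)/T$ lies in $[0,1]$, and it is $\mathcal{O}(1)$ exactly as argued in Proposition~\ref{prop1} via assumption 5). Hence $R(M)$ inherits the same orders: $\mathcal{O}(\log M)$ for $\rho<2$ and $\mathcal{O}(1)$ for $\rho=2$.

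Dividing by the assumed $P_{\mathrm{tot}}(M)=\mathcal{O}(M^{\delta})$ gives \eqref{EE_scaling}. The limit statement follows because $\log M/M^{\delta}\to0$ and $1/M^{\delta}\to0$ for any $\delta>0$, and specializing to $\delta=1$ gives the final claim.

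The main obstacle is the direction of the asymptotic notation. Dividing by $P_{\mathrm{tot}}$ requires a \emph{lower} bound on $P_{\mathrm{tot}}(M)$, of order $\Omega(M^{\delta})$, rather than an upper bound; likewise, the SINR order requires $P_{\mathrm I}$ to be of exact order $M^{\rho}$. I would therefore interpret the scaling hypotheses in the $\Theta(\cdot)$ sense, consistent with how Proposition~\ref{prop1} treats them. The justification is that $P_{\mathrm{tot}}$ contains the BS power, a positive constant, together with the per-element PIN-diode and amplifier terms in \eqref{power1}, which grow linearly in $M$ net of EH. Under this reading $P_{\mathrm{tot}}(M)\ge cM^{\delta}$ for large $M$, and the ratio bound becomes an honest upper bound, which is what makes the vanishing-EE conclusion rigorous.
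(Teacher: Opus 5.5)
Your proposal is correct and follows the paper's proof essentially step for step: substitute $P_{\mathrm I}(M)=\mathcal{O}(M^{\rho})$ into the SINR form inherited from Proposition~\ref{prop1}, keep the dominant denominator order $M^{\max\{\rho,1\}}$, pass through the logarithm (with the bounded factor $\tau(M)$) to $\mathcal{O}(\log M)$ or $\mathcal{O}(1)$, and then divide by $P_{\mathrm{tot}}(M)$. Your remark that the division and the $\rho=2$ case need the hypotheses in the $\Theta(\cdot)$ sense, in particular $P_{\mathrm{tot}}(M)=\Omega(M^{\delta})$, is a genuine rigor point that the paper leaves implicit (as is your exponent $2-\rho$ for $1<\rho<2$, where the paper loosely says ``grows linearly''), but that lower bound should be stated as an assumption rather than justified from \eqref{power1}, since under the self-sustainability constraint \eqref{prob_25d} the net MM-RIS term in \eqref{power2} is nonpositive.
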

\begin{proof}
From Proposition \ref{prop1}, the desired received signal power scales as $\mathcal{O}(M^2)$, whereas the MM-RIS-induced amplification noise power scales as $\mathcal{O}(M)$. If the residual interference satisfies
$P_{\mathrm I}(M)=\mathcal{O}(M^{\rho})$ owing to imperfect interference cancellation, the SINR can be expressed as $\gamma(M) = \mathcal{O}\left( \frac{M^2} {M^{\rho}+M+1} \right)$. For $\rho<2$, the dominant denominator term scales as $\mathcal{O}(M^{\max\{\rho,1\}})$, yielding $\gamma(M) = \mathcal{O}\left( M^{2-\max\{\rho,1\}} \right)$. Since $2-\max\{\rho,1\}>0$ for $\rho<2$, SINR grows linearly with $M$, and then we have the achievable rate $R(M) = \log_2\left(1+\gamma(M)\right) =\mathcal{O}(\log M)$. When $\rho=2$, the residual interference scales at the same quadratic
order as the desired signal. Hence, $\gamma(M)=\mathcal{O}(1)$ provides
$R(M)=\mathcal{O}(1)$. According to \eqref{power1} and \eqref{power2}, the effective total power consumption consists of the element-based control, amplification, and movement power after accounting for EH. By characterizing the uncompensated element overhead with the exponent $\delta$, the resulting effective total power can be obtained and scales as $P_{\mathrm{tot}}(M)=\mathcal{O}(M^{\delta})$, with $0\leq \delta \leq 1$. Therefore, the EE defined as $\eta_{\mathrm{EE}}(M) = \frac{R(M)}{P_{\mathrm{tot}}(M)}$ scales as \eqref{EE_scaling}. For any $\delta>0$, $M^\delta$ grows asymptotically faster than $\log M$ or any bounded rate term. Therefore, $\eta_{\mathrm{EE}}(M) \rightarrow 0$ as $M \rightarrow \infty$. This completes the proof.
\end{proof}

Proposition \ref{prop1} and Corollary \ref{cor1} reveal the competing effects of coherent array gains, residual interference, amplification noise, and element-based power overhead. Although increasing $M$ provides an $\mathcal{O}(M^2)$ desired signal power gain, the resulting rate grows at most logarithmically when $\rho<2$ and becomes bounded in the interference-limited case of $\rho=2$. Meanwhile, any uncompensated polynomial power growth with $\delta>0$ eventually dominates the rate improvement. Accordingly, increasing the number of MEs does not necessarily improve EE monotonically, and a favorable finite operating region generally exists depending on the balance among array gains, interference, amplification noise, harvested energy, and hardware/movement overhead. This theoretical observation corresponds to the numerical results presented later, where EE initially improves with $M$ but eventually decreases as the additional power overhead outweighs the marginal communication gain.

\section{Proposed CRAFTS Framework}
\label{sec_alg}

\subsection{Multi-Agent Constrained Hybrid DRL}

We consider a multi-agent hybrid DRL framework characterized by state space $\mathcal{S}$, action space $\mathcal{A}$, and reward $\mathcal{R}$. Within this framework as illustrated in Fig. \ref{DRL_plot}, each agent corresponds to either the BS or a single MM-RIS, which interacts with the dynamic environment by taking actions, receiving rewards, and updating its local states accordingly. Conventional centralized DRL methods struggle under conditions of high complexity and slow convergence. Moreover, the use of pure DQN or PPO becomes compellingly impractical, as both quantizing continuous variables into discrete ones and recovering introduce extra overhead and potential quantization errors. To overcome these challenges, we adopt a hybrid DRL architecture that incorporates both DQN and PPO modules for efficiently handling discrete and continuous action spaces. The state, action, and reward are defined as follows:
 \begin{itemize}
    \item \textbf{State:} The total state space is defined as a set of individual agent state $\mathcal{S}(t) = \{s_1(t), s_2(t), \ldots, s_Q(t), s_{Q+1}(t)\}$. Each agent state $s_q(t)$ is designed as $s_q(t) = \{\mathbf{g}_{q,kj}(t) | \forall k \in \mathcal{K}, \forall j \in\mathcal{J}_k\}, \forall q\in\mathcal{Q}$ and $s_{Q+1}(t) = \{\mathbf{g}_{kj}(t) | \forall k \in \mathcal{K}, \forall j \in\mathcal{J}_k \}$ associated with the combined channel at timestep $t$. We note that index $1\leq q\leq Q$ indicates the MM-RIS agents, whereas $q=Q+1$ stands for the BS agent. The corresponding set of all agents is defined as $\bar{\mathcal{Q}}=\mathcal{Q}\cup \{\text{BS}\}$.

    \item \textbf{Action:} The action space is defined as a set of individual action $ \mathcal{A}(t) = \{ a_1(t), a_2(t), \ldots, a_Q(t), a_{Q+1}(t)\} $. For agents representing MM-RIS $q \in \mathcal{Q}$, each action $a_q(t)= \{ a_q^{\text{dis}}(t), a_q^{\text{con}}(t)\}$ is composed of both discrete and continuous components. Specifically, the discrete action corresponds to the selection of mode of each MM-RIS element, defined as $a_{q}^{\text{dis}}(t) = \{\alpha_{q,m}|\forall m \in \mathcal{M}\}$. On the other hand, the continuous action consists of amplitude, phase-shifts, MM-RIS location, element position, denoted as  $a^{\text{con}}_q(t) = \{ \beta^{k}_{q,m}, \theta^{k}_{q,m}, \mathbf{w}_q, \mathbf{r}_{q,m} | \forall k \in \mathcal{K},  \forall m \in \mathcal{M}\}$. As for the $(Q+1)$-th agent representing the BS, the output action consists of only continuous variables of beamforming, power split factor and common rate allocation, denoted as $a_{Q+1}(t) = \{ \mathbf{f}_{k}^c, \mathbf{f}_{k}^p, p_{k}^c, p_{kj}, R_{kj}^c  | \forall k\in\mathcal{K}, j\in\mathcal{J}_k\}$.
    
    \item \textbf{Reward:} 
The instantaneous shared reward is attained as the overall EE in conjunction with its constraints, given by
\begin{align} \label{rew}
	r(t) = \sum_{k\in \mathcal{K}}\sum_{j\in \mathcal{J}_k} \frac{R_{kj}}{P_{\text{total}}}, \quad \text{ s.t. }  \eqref{prob_25b} \text{-} \eqref{prob_25i}.
\end{align}
Note that all agents share the reward feedback by the environments, i.e., $r_q(t)=r(t)$ for all $q\in\bar{\mathcal{Q}}$.
\end{itemize}

\begin{figure}[!t]
\centering
\includegraphics[width=3.3in]{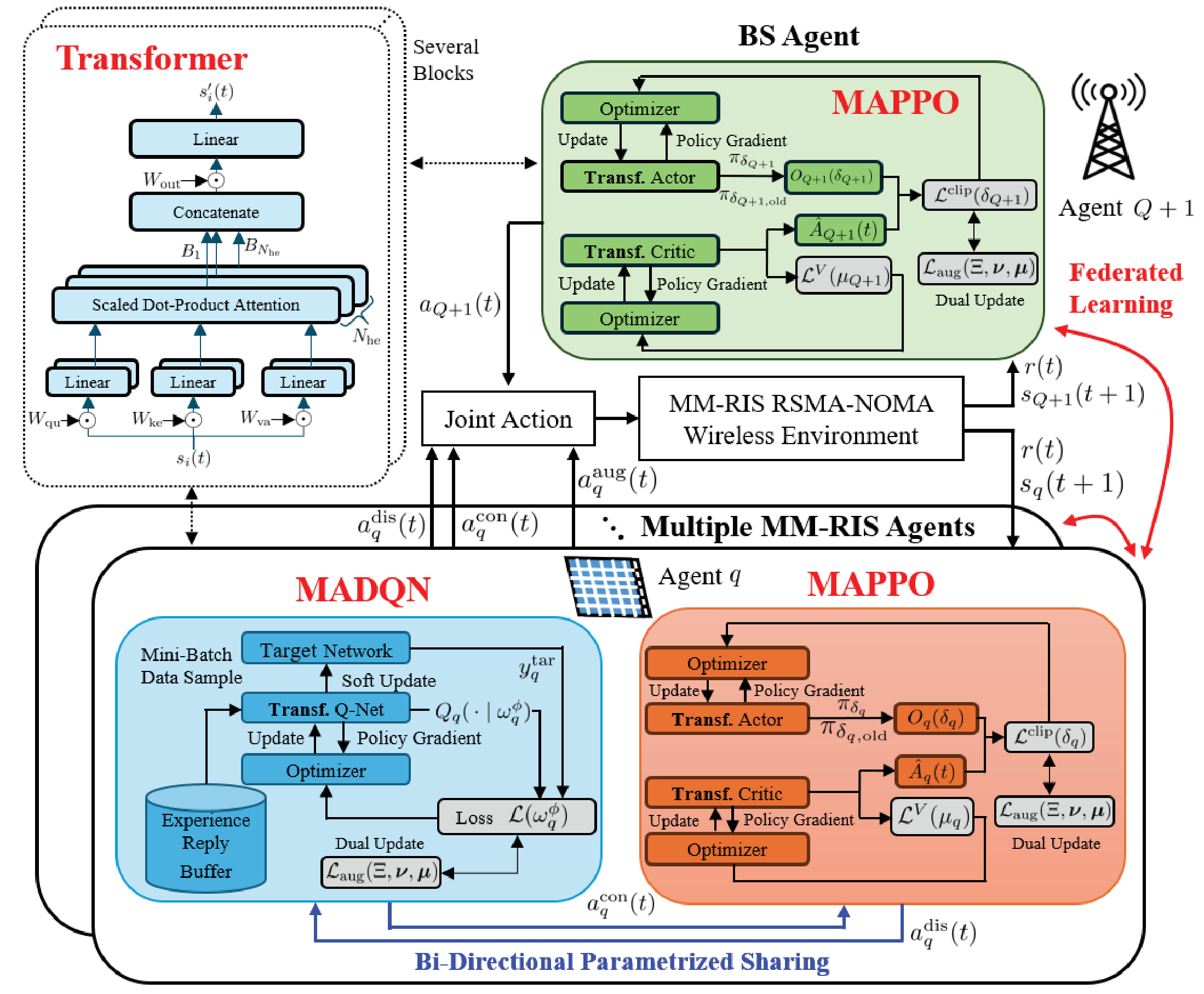}
\caption{The proposed CRAFTS framework with multi-agent hybrid constrained DRL scheme. Note that "Transf." indicates transformer architecture.}
\label{DRL_plot}
\end{figure}

\subsubsection{Lagrangian-based Constrained DRL}

Most existing works adopt penalty-based reward designs \cite{han}; however, tuning the associated penalty coefficients is often labor-intensive and highly sensitive to system configurations. Furthermore, employing hierarchical RL or metaheuristic approaches \cite{my_rl_yt} to optimize the hyperparameters of the primary RL framework may introduce excessive computational and training overhead. These methods also do not guarantee the satisfaction of constraints. Therefore, inspired by \cite{35}, we propose a Lagrangian-based method to iteratively guarantee the satisfaction of all involved constraints. The shared reward can be designed using the augmented Lagrangian in conjunction with the constraints as penalties, given by
\begingroup
\allowdisplaybreaks
\begin{align} \label{rew_lag}
	\mathcal{L}_{\text{aug}}(\Xi, \boldsymbol{\nu},\boldsymbol{\mu}) &= \mathbbm{E} \left[ \sum_{t=0}^{\infty} \gamma^t   r(t) \right]  \!-\! \sum_{q\in \mathcal{Q}} \sum_{\ell\in \mathcal{C}} \nu_{q,\ell}  \mathcal{C}_{q,\ell} \!-\! \upsilon  \mathcal{H},
\end{align}
\endgroup
where $\gamma^t\in[0,1]$ is the discount factor at episode $t$ indicating the importance of future rewards. $\mathcal{C}$ indicates the set of inequality constraints, whilst $\Xi$ means the policy of DRL. Notations of $\nu_{q,\ell}$ and $\upsilon$ are Lagrangian multipliers standing for the penalty coefficients for inequality constraints $\mathcal{C}_{q,\ell}$ and equality constraint $\mathcal{H}$, respectively. The constraint penalties are given by
$\mathcal{C}_{q,1} = \sum_{m\in \mathcal{M}} \left[ \frac{1 - (\beta_{q,m}^{\rm r})^2 - (\beta_{q,m}^{\rm t})^2}{2} - \beta_{q,m}^{\rm r} \beta_{q,m}^{\rm t} \cos(\theta_{q,m}^{\rm t} - \theta_{q,m}^{\rm r}) \right]$
,
$\mathcal{C}_{q,2} =  \sum_{m\in \mathcal{M}} \left[ \beta_{q,m}^{\rm r} \beta_{q,m}^{\rm t} \cos(\theta_{q,m}^{\rm t} - \theta_{q,m}^{\rm r}) - \frac{(\beta_{q,m}^{\rm r})^2 + (\beta_{q,m}^{\rm t})^2 - 1}{2} \right]$
,
$\mathcal{C}_{q,3} = \sum_{k\in\mathcal{K}} \sum_{j\in\mathcal{J}_k} \sum_{l\in\mathcal{L}_{kj}\setminus j} \left( \gamma_{kj\rightarrow kj}^{p} - \gamma_{kl\rightarrow kj}^{p} \right)$
,
$\mathcal{C}_{q,4} = \sum_{k\in \mathcal{K}}  \sum_{j\in \mathcal{J}_k} \left( R^{\text{min}}_{kj} - R_{kj} \right)$
,
$\mathcal{C}_{q,5} = P_{q}^{\text{con}} - \sum_{m \in \mathcal{M}} P_{q,m}^{\text{EH}}$
,
$\mathcal{C}_{q,6} = \sum_{k \in \mathcal{K}} (\| \mathbf{f}_k^c \|^2+ \| \mathbf{f}_{k}^p \|^2) - P_{\text{BS}}^{\text{max}}$
,
$\mathcal{C}_{q,7} = \sum_{k \in \mathcal{K}} \left( \sum_{j \in \mathcal{J}_k} R_{kj}^c - \min_{j\in \mathcal{J}_k} C_{kj} \right)$
,
$\mathcal{C}_{q,8} =  \sum_{\substack{m,m'\in \mathcal{M}, \\ m\neq m'}} \left( D - \|\mathbf{r}_{q,m} - \mathbf{r}_{q,m'}\| \right)$
,
and
$\mathcal{H} = \sum_{k\in\mathcal{K}} \left( p_{k}^c+\sum_{j \in \mathcal{J}_k} p_{kj} - 1 \right)$. 
To solve the constrained DRL problem in \eqref{rew_lag}, we employ a Lagrangian relaxation-based approach \cite{42, 43} to enable agents to learn a constraint-aware policy by balancing the trade-off between network performance and operational safety. The corresponding Lagrangian primal-dual problem is derived by
\begin{align}
    \min_{\nu_{q,\ell} \geq 0, \upsilon \geq 0} \max_{\Xi} \ \mathcal{L}_{\text{aug}}(\Xi, \boldsymbol{\nu},\boldsymbol{\upsilon})
    \label{lagr}.
\end{align}
In order to solve the min-max problem in \eqref{lagr}, we adopt an alternative optimization iteratively obtaining the primal-dual solutions. That is, for each iteration, the policy $\Xi$ served as primal variables and the dual variables of $\{ \nu_{q,\ell},\upsilon \}$ tackling constraints are updated alternately. Thus, after conducting the main DRL scheme of policy $\Xi$, the dual variables can be updated by minimizing the dual subproblem of
\begin{align}
    \min_{\nu_{q,\ell} \geq 0, \upsilon \geq 0} \  \sum_{\ell\in \mathcal{C}} \nu_{q,\ell} \cdot \mathcal{C}_{q,\ell} + \upsilon \cdot \mathcal{H}.
\end{align}
The dual variables are then updated to minimize violations by
\begingroup
\allowdisplaybreaks
\begin{align} \label{d_update}
	\nu_{q, \ell} \leftarrow \left[ \nu_{q, \ell} + b_{q,\ell} \cdot \mathcal{C}_{q,\ell}\right]^{+}, \qquad 
	\upsilon \leftarrow \left[\upsilon + b_{\upsilon} \cdot \mathcal{H} \right]^{+},
\end{align}
\endgroup
where $\{b_{q,\ell}, b_{\upsilon}\}$ indicate the learning rates of dual variables, whilst $[\cdot]^{+}$ denotes the projection onto the non-negative orthant. Different from the conventional Lagrangian methods where dual learning rates are fixed, the proposed framework adopts a joint action-multiplier optimization mechanism. The corresponding Lagrangian-based action is defined as $a_{q}^{\text{aug}}(t) = \{b_{q,\ell}, b_{\upsilon} | \ell \in\mathcal{C} \}$ for each agent $q \in \bar{\mathcal{Q}}$. Since all multipliers are continuous, the new action is presented as $\bar{a}_q^{\text{con}}(t) = \{ a^{\text{con}}_q(t), a_{q}^{\text{aug}}(t)\}$.

\subsubsection{MADQN for Discrete Policy}

MADQN employs a deep neural network, termed Q-network, to approximate the Q-function $Q_q(s_q(t), a_q^{\text{dis}}(t)| \omega_q^{\phi})$ which estimates the expected cumulative reward for each action $a_q^{\text{dis}}(t)$ under a given state $s_q(t)$. Note that we define $\omega^{\phi}_q$ and $\omega^{\phi^{-}}_q$ as the model weights of the current network and of the target network of DQN, respectively. Based on the estimated Q-value, each agent selects its action using an $\epsilon$-greedy strategy, which balances exploration and exploitation by choosing a random action with probability $\epsilon(t)$ and by selecting the action with the maximum predicted Q-value with probability $ 1 - \epsilon(t)$. We design the decaying exploration as $\epsilon(t) =\epsilon(t-1) - \frac{\epsilon_{\text{max}} - \epsilon_{\text{min}}}{\epsilon_d}$, where $\epsilon_d$ is the decaying constant. Notations of $\epsilon_{\text{max}}$ and $\epsilon_{\text{min}}$ indicate the maximum and minimum exploration rates, respectively. To enhance training stability, MADQN incorporates two critical techniques \cite{38}: 
(i)
\textit{Experience replay buffer} stores historical trajectories with a tuple of $ (s_q(t), a_q^{\text{dis}}(t), r_q(t), s_q(t+1))$, which allows the agent to sample mini-batches uniformly and eliminate temporal correlations during learning, where $s_q(t+1)$ indicates the newly observed state;
(ii) 
\textit{Target network update} is developed with its model denoted as $Q'_q(s_q(t), a_q^{\text{dis}}(t)| \omega_q^{\phi^-})$, which is periodically softly updated to provide stable DRL learning, i.e., $\omega_q^{\phi^-} \leftarrow \tau_{\phi} \cdot \omega_q^{\phi} + ( 1 - \tau_{\phi})  \cdot \omega_q^{\phi^-}$ where $\tau_{\phi}$ indicates the importance of target model of MADQN. The Q-network is then updated by minimizing the temporal-difference error, which measures the discrepancy between the predicted and the target Q-values, which is represented by 
$\mathcal{L}(\omega_{q}^\phi) = 
\mathbb{E}
\left\lbrace
\left[
y_q - Q_{q} (s_{q}(t), a_{q}^{\text{dis}}(t) | \omega_{q}^\phi)
\right]^{2}
\right\rbrace$,
where $y_q = r_q(t) + \gamma^{\text{dis}} \cdot \max_{a'} Q'_{q} (s_{q}(t+1), a' | \omega_{q}^{\phi^-})$ indicates the temporal difference target value and the discount factor $\gamma^d \in [0, 1] $ means the importance of future rewards.
Note that $r_q(t)=r(t), \forall q\in\mathcal{Q}$ is derived from \eqref{rew_lag} containing penalty constraints. Notation of $Q'_q(\cdot)$ is the Q-value of the target network of agent $q$. The parameter update via gradient descent is expressed as $\omega_{q}^\phi \leftarrow \omega_{q}^\phi - \eta^{\text{dis}} \cdot \nabla_{\omega_{q}^\phi} \, \mathcal{L}(\omega_{q}^\phi)$, where $\eta^{\text{dis}} \in [0, 1] $ presents the learning rate of MADQN.

\subsubsection{MAPPO for Continuous Policy}

The remaining continuous parameters are optimized using the PPO algorithm \cite{39}. Particularly, MAPPO adopts an actor-critic framework in a multi-agent manner, where each agent consisting of a policy network for actor and of a value network for critic. In the policy network, the neural network outputs the mean and standard deviation of a multivariate Gaussian distribution, where actions are sampled according to the current state $s_q(t)$ and policy $\pi_{\delta_q}(\bar{a}_q^{\text{con}}(t)|s_q(t))$. Note that $\delta_q$ indicates the policy network parameters. To optimize the policy network, we employ a clipped surrogate objective function which is given by
\begingroup
\allowdisplaybreaks
\begin{align} \label{loss_clip}
	\mathcal{L}^{\text{clip}} (\delta_q) 
= - {\mathbb{E}}_t & \big[ \min\big( O_q(\delta_q)\hat{A}_q(t), \notag \\
&\quad \mathrm{clip}( O_q(\delta_q), 1 - \Lambda, 1 + \Lambda)\hat{A}_q(t) \big) \big],    
\end{align} 
\endgroup
where ${\mathbb{E}}[\cdot]$ is the expectation over a batch of generated trajectories, and $O_q(\delta_q) = \frac{\pi_{\delta_q}(\bar{a}_q^{\text{con}}(t)|s_q(t))}{\pi_{\delta_{q,\text{old}}}(\bar{a}_q^{\text{con}}(t)|s_q(t))}$ is the probability ratio. $\mathrm{clip}(\cdot)$ indicates the clipping function which clips the change between the new and old policies within a range of $[1-\Lambda, 1+\Lambda]$ to avoid excessive policy updates. Note that $\delta_{q,\text{old}}(\cdot)$ indicates the old policy parameters. Furthermore, $\hat{A}_q(t)$ is the generalized advantage estimation (GAE) quantifying the difference between the observed outcome of each action in a state and the predicted state value $ V_{\mu_q}(s_q(t))$ by the value network, represented by
$\hat{A}_q(t) = \sum_{i=t}^{T_{\text{tr}}-t}(\gamma^{\text{con}} \lambda^{\text{con}})^{i-t} \cdot \left\lbrace [ (r(i) + \gamma^{\text{con}} V_{\mu_q} (s_q(i+1)) ] - V_{\mu_q} (s_q(i)) \right\rbrace$, where $\gamma^{\text{con}}$ and $\lambda^{\text{con}}$ indicate the importance ratio and GAE hyperparameters of MAPPO, respectively. Notation $T_{\text{tr}}$ means the length of trajectory segment. Note that $r(i)$ follows \eqref{rew_lag} containing penalty constraints. The policy is optimized iteratively by maximizing the clipped surrogate objective using gradient ascent given by $\delta_q \leftarrow {\delta_q} - \eta^{\text{con}}_{\text{a}} \cdot \nabla_{\delta_q} \mathcal{L}^{\text{clip}} (\delta_q)$, where $\eta^{\text{con}}_{\text{a}} \in [0, 1] $ represents the learning rate of the actor in MAPPO. The associated loss function of the value network is defined as 
\begin{align} \label{loss_value}
	\mathcal{L}^{V}(\mu_q) = \mathbb{E}_t [  \big( V_{\mu_q}(s_q(t)) - \hat{V}^{\text{tar}}_q(t) \big)^2],
\end{align}
where $\hat{V}^{\text{tar}}_q(t) 
= V_{\mu_q^-}\!\big(s_q(t)\big) + \hat{A}_q(t)$ and $\mu_q^-$ indicates the previous update of the value network. The parameters of value network are updated by the gradient method, i.e., $\mu_q \leftarrow {\mu_q} - \eta^{\text{con}}_{\text{c}} \cdot \nabla_{\mu_q} \mathcal{L}^{V} (\mu_q)$, where $\eta^{\text{con}}_{\text{c}} \in [0, 1] $ is learning rate for critic in MAPPO.

\subsection{Intra-Agent Bi-Directional Parametrized Sharing}

\begin{figure}[!t]
\centering
\includegraphics[width=3in]{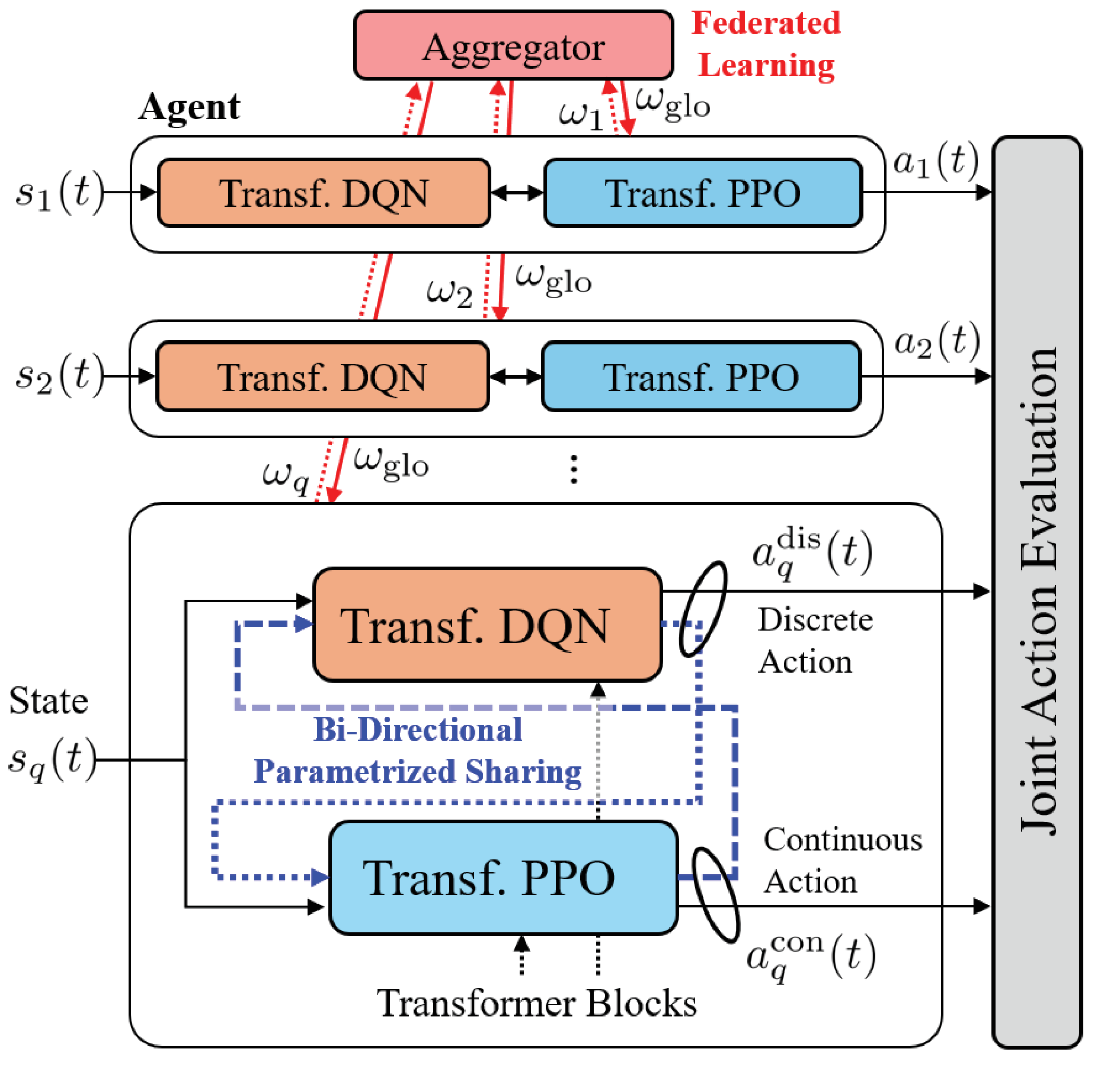}
\caption{The proposed bi-directional parametrized sharing and federated transformer architecture for intra-agent and inter-agent coordination, respectively.} 
\label{interintra}
\end{figure}

In the context of individual agent design in hybrid DRL, PPO and DQN typically select actions independently based on their respective input states. This isolated decision-making process neglects the potential interdependence and interaction between the two strategies within intra-agent structure. To address this, inspired by the single-directional sharing mechanism in \cite{my_ps, 30, 31}, we propose a bi-directional parametrized sharing mechanism as depicted at bottom of Fig. \ref{DRL_plot}, where the details are illustrated in Fig. \ref{interintra}. The core idea is to enable the shared representation between the PPO and DQN models by exchanging features. Specifically, PPO agent utilizes the discrete action output from the DQN agent as the input of PPO, i.e., $s_q^{\text{con}}(t) = \operatorname{concat} ( s_q(t), \operatorname{vec}(a_q^{\text{dis}}(t-1)) )$, where ${\rm concat}(\cdot)$ indicates the concatenation of two vectors and ${\rm vec}(\cdot)$ vectorizes the action. Similarly, DQN agent also employ the continuous action output from PPO as the input, derived by $s_q^{\text{dis}}(t) = \operatorname{concat} ( s_q(t), \operatorname{vec}(a_q^{\text{con}}(t-1)) )$. 
Note that the parametrized sharing is only applied to the MM-RIS agents due to their hybrid discrete-continuous action structures, which improves the coordination between the two decision-making modules.

\subsection{Inter-Agent Federated Transformer}

Considering all state features with equal importance is impractical in MM-RIS-assisted hybrid RSMA-NOMA systems since different channel conditions and communication variables may contribute unequally to the decision-making process. Therefore, we incorporate a transformer-based mechanism \cite{32, my_ojcom} to adaptively assign distinct importance weights to different state features, as depicted at top-left of Fig. \ref{DRL_plot}. Accordingly, instead of directly utilizing the raw observation state, each agent first processes the state through a multi-head self-attention architecture associated with the linearly projected \textit{query}, \textit{key}, and \textit{value}, given by $Y_{\text{qu}} = W_{\text{qu}} s_{q}(t),
Y_{\text{ke}} = W_{\text{ke}} s_{q}(t),
Y_{\text{va}} = W_{\text{va}} s_{q}(t),$
where $W_{\text{qu}/\text{ke}/\text{va}}$ denote trainable projection matrices. Subsequently, the attention output is computed through the scaled dot-product operation as
\begin{align}\label{att}
B = \mathrm{Att}(Y_{\text{qu}}, Y_{\text{ke}}, Y_{\text{va}}) = \mathrm{Softmax} \left( \frac{Y_{\text{qu}} Y_{\text{ke}}^{\mathrm{T}}}{\sqrt{d_{\text{ke}}}} \right) \cdot Y_{\text{va}},
\end{align}
where $\mathrm{Att}(\cdot)$ denotes the attention operation and $\mathrm{Softmax}(\mathbf{x}) = \frac{e^{x_i}}{\sum_{j} e^{x_j}}$ represents the softmax activation function. The notation $d_{\text{ke}}$ indicates the feature dimension of the key space associated with $W_{\text{ke}}$. 
To further improve feature extraction capability and avoid the limitation of relying on a single attention representation, a multi-head attention structure is incorporated. Specifically, the transformed attention-aware state is generated by concatenating multiple attention heads, expressed as $s'_{q}(t) \!=\! \mathrm{MH}(Y_{\text{qu}}, Y_{\text{ke}}, Y_{\text{va}}) \!=\! \operatorname{concat}(B_1, B_2, ..., B_{N_{\text{he}}}) W_{\text{out}}$,
where $\mathrm{MH}(\cdot)$ denotes the multi-head attention operation, and $B_{n_{\text{he}}} = \mathrm{Att}(\cdot)$ is the attention output corresponding to the $n_{\text{he}}$-th attention head obtained from \eqref{att}. Moreover, $N_{\text{he}}$ denotes the number of attention heads, whereas $W_{\text{out}}$ is the trainable output projection matrix.

While the parametrized sharing mechanism effectively facilitates the coordination between MAPPO and MADQN within intra-agent, the lack of information exchanged between agents potentially leads to suboptimal solutions. To address the challenge, we design a decentralized coordination scheme by incorporating FL into the multi-agent system \cite{34, 41}, as shown at right of Fig. \ref{DRL_plot}, where the details are depicted in Fig. \ref{interintra}. Particularly, each agent independently maintains local DQN and PPO modules for distributed action execution. The local model parameters are then periodically uploaded to an aggregation server, which can be implemented at the edge or BS controller. Considering the distinct learning characteristics of continuous and discrete policies, the edge performs decoupled model aggregation for the PPO and DQN modules separately. The aggregation global model updates for round $t+1$ are expressed as
\begin{align} \label{fl_agg}
\omega_{\text{glo}} = 
\begin{cases}
	\frac{1}{Q} \sum_{q \in 
	\mathcal{Q}} \varsigma_q \cdot \omega_{q}, \qquad \text{if } \omega_{q}= \omega_{q}^{\phi}, \\
	\frac{1}{Q+1} \sum_{q \in \bar{\mathcal{Q}}} \varsigma_q \cdot \omega_{q}, \quad \text{if } \omega_{q}\in\{ \delta_q, \mu_q \},
\end{cases}
\end{align}
where $\varsigma_q \in [0, 1]$ represents the importance weight of agent $q$. It is worth noticing that $\omega_q$ consists of transformer-related trainable model weights of $W_{\text{qu}/\text{ke}/\text{va}/\text{out}}$ per agent. This parallel aggregation process ensures global policy consistency across distributed agents while preserving local data privacy and reducing communication overhead. After the decoupled aggregation, the updated global parameters are broadcast to all local agents, i.e., $\omega_{q} \leftarrow \omega_{\text{glo}}$ for subsequent training and policy refinement.

\begin{algorithm}[!t]
\caption{Proposed CRAFTS Scheme}
\footnotesize
\label{alg}
\begin{algorithmic}[1]
\STATE \textbf{Initialization:}
\STATE \quad 1) Initialize environment and MADQN/MAPPO networks
\STATE \quad 2) Initialize transformer parameters and set
$a_{q}^{\text{dis}}, \bar{a}_q^{\text{con}}$, $\forall q\in\bar{\mathcal Q}$
\STATE \quad 3) Initialize Lagrangian multipliers $\nu_{q,\ell}$ and $\upsilon$
\STATE \quad 4) Initialize replay buffer $\mathcal{D}_{q}$ and trajectory/mini-batch set $\mathcal{X}_{q}$

\FOR{$t = 1$ to $T$}
    \FOR{$q \in \bar{\mathcal{Q}}$}
        \STATE Extract attention-aware state
        $s'_{q}(t)$
        \IF{$q \in \mathcal{Q}$}
            \STATE $a^{\text{dis}}_{q}(t) \leftarrow {\rm MADQN} ( {\rm concat}(s'_{q}(t), a^{\text{con}}_{q}(t-1)) )$
            \STATE $\bar{a}^{\text{con}}_{q}(t) \leftarrow {\rm MAPPO} ( {\rm concat}(s'_{q}(t),a^{\text{dis}}_{q}(t-1)) )$
        \ELSE
            \STATE $\bar{a}^{\text{con}}_{q}(t) \leftarrow {\rm MAPPO}(s'_{q}(t))$
        \ENDIF
        \STATE Extract $a^{\text{aug}}_{q}(t)$ from $\bar{a}^{\text{con}}_{q}(t)$
        \STATE Apply joint action $a_{q}(t)=\{a^{\text{dis}}_{q}(t),\bar{a}^{\text{con}}_{q}(t)\}$
    \ENDFOR

    \FOR{$q \in \bar{\mathcal{Q}}$}
        \STATE Observe $s_{q}(t+1)$ and obtain shared reward $r_{q}(t)=r(t)$
        \STATE Store transition/trajectory samples in $\mathcal{D}_{q}$ and $\mathcal{X}_{q}$

        \STATE \textbf{(MADQN Module):}
        \IF{$q \in \mathcal{Q}$}
            \STATE Sample mini-batch from $\mathcal{D}_{q}$
            \STATE Compute target Q-value and MADQN loss
            \STATE Perform gradient descent to update $\omega^{\phi}_{q}$
            \STATE Soft update target MADQN parameters
        \ENDIF

        \STATE \textbf{(MAPPO Module):}
        \STATE Compute GAE $\hat{A}_{q}(t)$ and target value
        \STATE Compute clipped actor loss \eqref{loss_clip} and critic loss \eqref{loss_value}
        \STATE Update MAPPO actor/critic parameters $\delta_q$ and $\mu_q$

        \STATE \textbf{(Lagrangian Update):}
        \STATE Update $\nu_{q,\ell}$ and $\upsilon$ according to \eqref{d_update} using $a^{\text{aug}}_{q}(t)$
    \ENDFOR

    \IF{FL aggregation round}
        \STATE Aggregate local MADQN, MAPPO, and transformer parameters according to \eqref{fl_agg}
        \STATE Broadcast global parameters $\omega^{\mathrm{glo}}$ to all agents
    \ENDIF
\ENDFOR
\end{algorithmic}
\end{algorithm}

The overall procedure of the proposed CRAFTS scheme is summarized in Algorithm \ref{alg}. The process begins by initializing the MM-RIS-assisted hybrid RSMA-NOMA environment, the local MADQN and MAPPO networks, the transformer modules, and the Lagrangian multipliers for all agents. At each time step, the local state of each agent is first processed by the transformer-based feature extractor. For each MM-RIS and BS agent, the discrete and continuous actions are then jointly determined by MADQN and MAPPO, respectively, where the bi-directional parametrized sharing mechanism exchanges the previous continuous and discrete actions between the two policy branches. In addition, MAPPO determines the continuous Lagrangian-related actions for adaptively updating the dual variables. The resulting joint actions are executed in the environment, after which the agents observe the updated states and obtain the shared reward. Subsequently, the MADQN and MAPPO parameters are updated according to their respective learning procedures, followed by the primal-dual updates of the Lagrangian multipliers. Moreover, the local model parameters are periodically aggregated through the FL mechanism and the updated global parameters are broadcast back to the agents. The above procedure is repeated until the prescribed training horizon is reached.

\section{Simulation Results} \label{sec_sim}

\begin{table}[!t]
\footnotesize
\setstretch{1}
\caption{Simulation Parameters} \label{param}
\centering
\begin{tabular}{|p{2.1cm}|p{5.5cm}|}
\hline
\textbf{Parameters} & \textbf{Values} \\ \hline
BS Channel Parameters & 
$K=2$,
$J_k=2$, 
$N=6$,
$M=16$,
$P^{\text{max}}_{\text{BS}}=40$ dBm,
$d_B=\lambda/2$,
$\mathcal{D}=\lambda/4$,
$\lambda=0.086$,
$h_0=-20$ dB,
$k_0=2.2$,
$\beta_0=5$,
$\sigma_{u}^2=\sigma_s^2=-80$ dBm,
$R_{kj}^{\text{min}}=1$ bps/Hz
\\ \hline
MM-RIS Parameters &
$\xi=1.1$,
$P_{\text{pin}}=0.33$ mW,
$P_C=10$ W,
$Z=24$ mW, 
$\varrho_1=150$, 
$\varrho_2=0.014$,
$(L_{\alpha}, L_{\beta}, L_{\theta})=(2^2, 2^{10}, 2^8)$,
$\beta_{\text{max}}^k=4$,
$T=1$ s,
$E_{q}^{\text{mov}}=1.75$ J/m,
$v_{q,m}=0.2$ m/s,
$A=0.225$ m$^2$,
$\mathbf{w}_{\text{min}}=[5,10,10]$ m, $\mathbf{w}_{\text{max}}=[5,45,10]$ m
\\ \hline
\end{tabular}
\label{tab:sim_params}
\end{table}

%

In simulations, we evaluate the performance of the proposed CRAFTS framework in a multi-MM-RISs-assisted downlink hybrid RSMA-NOMA network. We consider the BS positioned at $\mathbf{w}_b = [0, 0, 5]$ m, serving $J_k = 2$ users associated with $K = 2$ directions. The users are randomly distributed within a circular area of radius $5$ m, centered at $[0, 30, 0]$, $[0, 35, 0]$, $[10, 40, 0]$, and $[10, 45, 0]$ m, respectively. We consider $Q=3$ MM-RISs deployed, where each MM-RIS is equipped with $M = 16$ elements. While the BS has $N = 6$ antennas. The remaining parameters related to the network configurations are listed in Table \ref{tab:sim_params}. Specifically, the channel and network-related parameter settings follow papers \cite{14, 31}, whereas the MM-RIS deployment and ME parameters are configured according to \cite{14} and \cite{lim1, 27}, respectively. The hardware-related parameters, including the PIN-diode power consumption and quantization settings, follow the practical RIS hardware model in \cite{my_star, 37n38}. The parameters associated with nonlinear EH and active amplification are also selected based on models in \cite{14, my_ojcom}. Regarding DRL parameters, the learning rates of actor and critic networks in MAPPO are respectively set to $\eta_{\text{a}}^{\text{con}} = 10^{-3}$ and $\eta_{\text{c}}^{\text{con}} = 10^{-4}$, whereas that of the DQN is set to $\eta^{\text{dis}} = 10^{-3}$. The discount factors for both MADQN and MAPPO are $\gamma^{\text{dis}} = \gamma^{\text{con}} = 0.99$. The decay and soft update parameters of the MADQN are configured as $\epsilon_d = 10^4$ and $\tau_{\phi} = 10^{-2}$, respectively. The experience replay buffer of the DQN stores up to $10^6$ samples. A mini-batch mechanism is adopted during training with a batch size of $64$, whereas the maximum and minimum exploration constants are $\epsilon_{\text{max}} = 1$ and $\epsilon_{\text{min}} = 0$. Moreover, we set the PPO clipping ratio to $\Lambda = 0.2$, the GAE parameter to $\lambda^{\text{con}} = 0.97$, and the trajectory length to $T_{\text{tr}}=10^3$. The number of attention heads in transformer is set to $N_{\text{he}} = 2$, whilst $\varsigma_q = \frac{1}{Q}$ for equal contribution among agents. Additionally, the global model aggregation of FL and parameter synchronization are performed periodically per five DRL episodes to ensure consistent policy updates. The initial weights of Lagrangian penalty coefficients are set to $\nu_{q,\ell} = \upsilon = 1$.

\subsection{Convergence}

\begin{figure}[!t]
\centering
\includegraphics[width=\figsize in]{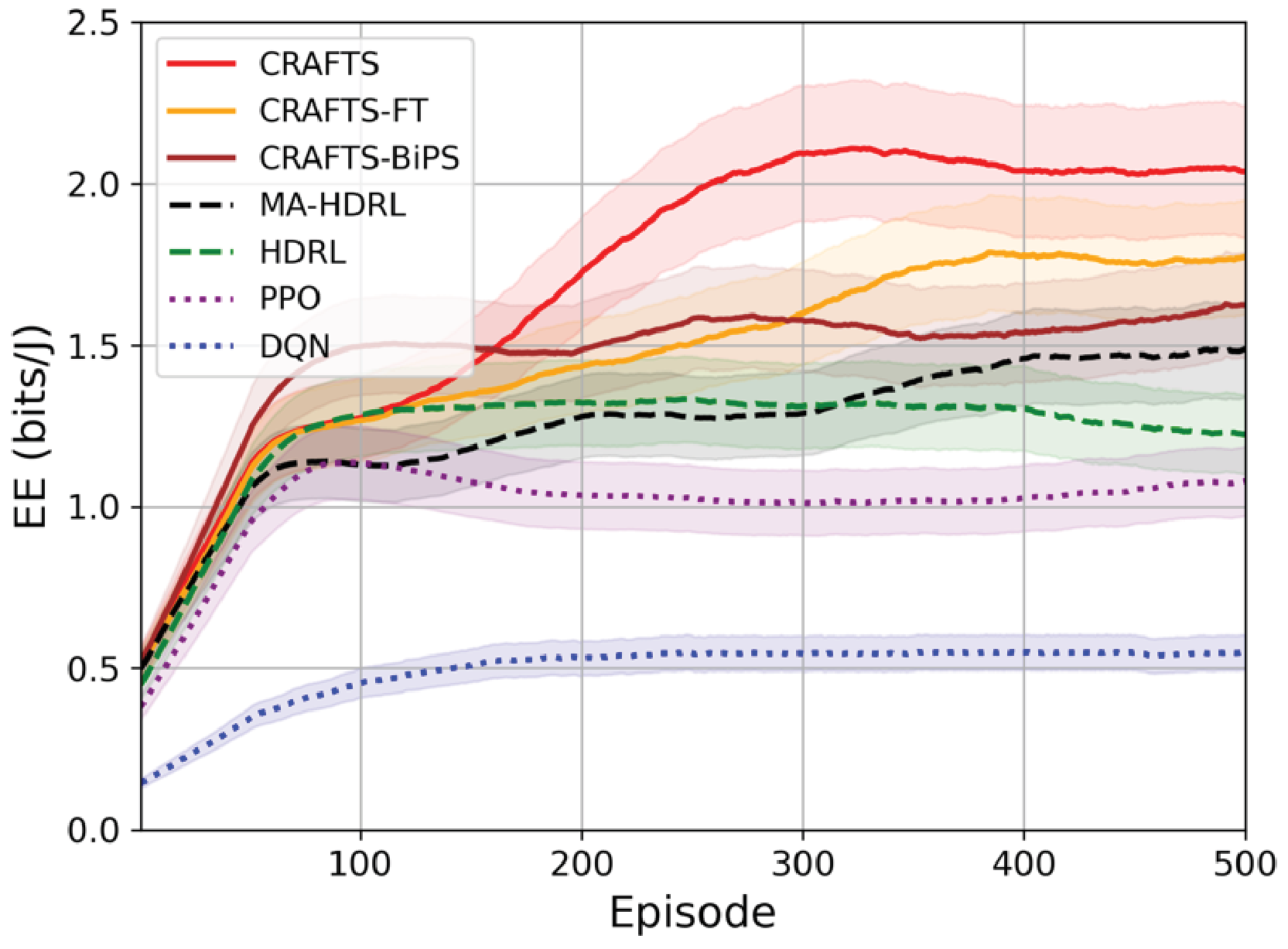}
\caption{Convergence of CRAFTS compared to the existing DRL benchmarks.} \label{f.sim.1}
\end{figure}

Fig. \ref{f.sim.1} illustrates the convergence of the proposed CRAFTS scheme in comparison with its downgraded variants and existing DRL benchmarks, including PPO \cite{my_ojcom}, DQN \cite{dqn}, hybrid DRL (HDRL) \cite{han}, and multi-agent HDRL (MA-HDRL) \cite{31}. Among the benchmarks, DQN achieves the lowest EE performance since its discrete-action structure suffers from the curse of dimensionality and lacks the capability to efficiently optimize high-dimensional continuous communication variables. Meanwhile, PPO exhibits improved convergence stability compared to DQN due to its capability to handle continuous action spaces.  Nevertheless, the absence of discrete decisions limits its ability to jointly optimize hybrid variables and leads to action quantization errors, resulting in suboptimal EE performance. Furthermore, HDRL employing a single-agent hybrid DQN-PPO structure outperforms both standalone PPO and DQN by jointly handling continuous and discrete actions. However, the centralized architecture suffers from the limited scalability and insufficient coordination in large-scale environments. In contrast, MA-HDRL further improves EE by leveraging multi-agent cooperation, enabling distributed optimization among multiple RIS agents and MEs. However, the lack of effective inter-agent knowledge sharing and feature extraction still limits its convergence efficiency and the corresponding performance. To further investigate the effectiveness of each proposed module, the variants of CRAFTS are evaluated. Specifically, CRAFTS-BiPS only incorporates the bi-directional parametrized sharing mechanism without FL and transformer enhancement. By enabling mutual information exchange between the PPO and DQN branches, CRAFTS-BiPS achieves noticeable EE improvement over MA-HDRL. Meanwhile, CRAFTS-FT integrating FL and transformer achieves superior stability and higher EE compared to benchmarks. The CRAFTS accomplishes the highest EE among all baselines as it benefits from hybrid DRL, FL, transformer, and bi-directional parametrized sharing for coupled mixed actions.

\begin{figure}[!t]
	\centering
	\subfigure[]{\includegraphics[width=1.6 in]{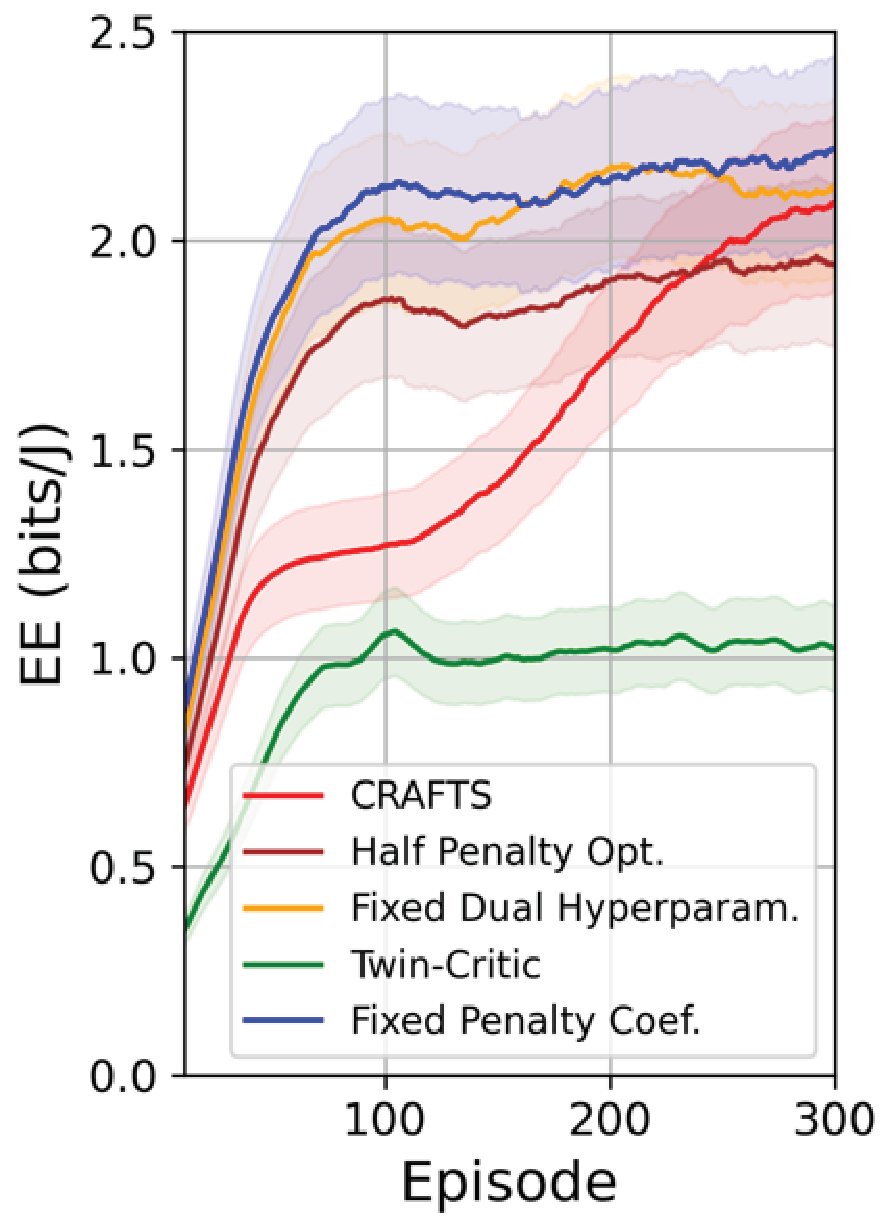} \label{f.sim.2a}}
	\subfigure[]{\includegraphics[width=1.6 in]{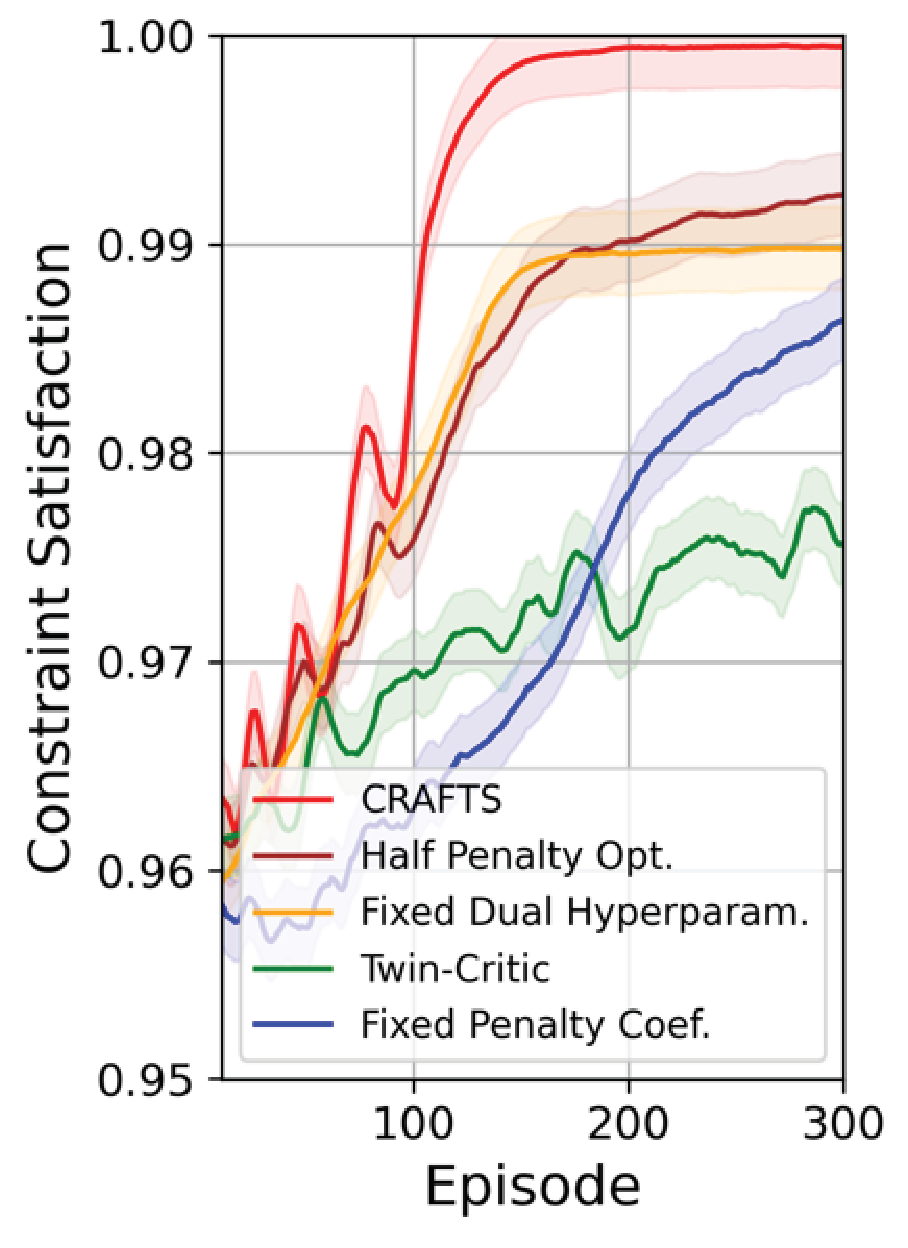} \label{f.sim.2b}}	
	\caption{Convergence of proposed CRAFTS scheme compared to different constraint strategies in terms of (a) EE and (b) constraint satisfaction.} \label{f.sim.2}
\end{figure}

Fig. \ref{f.sim.2} illustrates CRAFTS under different constraint-handling strategies in terms of EE in Fig. \ref{f.sim.2a} and constraint satisfaction in Fig. \ref{f.sim.2b}. "Half Penalty Opt." only optimizes a portion of the dual penalty parameters via PPO, whereas the remaining coefficients are fixed. "Fixed Dual Hyperparam." \cite{35} employs Lagrangian dual updates with predetermined hyperparameters, whilst "Fixed Penalty Coef." \cite{my_ps} completely removes adaptive dual optimization and instead relies on manually tuned fixed penalty coefficients. "Twin-Critic" \cite{td3} adopts doubled critic networks, where one critic evaluates EE objective and the other estimates penalty satisfaction. It can be observed that the proposed CRAFTS scheme achieves the asymptotic EE to baselines and accomplishes the highest constraint satisfaction ratio compared to baselines thanks to the adaptive primal-dual optimization dynamically adjusts Lagrangian multipliers. Meanwhile, the case with half-optimized dual penalty achieves lower EE and lower satisfaction ratio compared to CRAFTS since only partial dual variables are optimized. The case with fixed duals further demonstrates the importance of adaptive dual optimization. Although it incorporates a Lagrangian method, all hyperparameters associated with the dual learning remain fixed, which cannot efficiently adapt to varying constraints. Additionally, the method of fixed penalty coefficients exhibits the highest EE performance in sacrifice of non-adaptive dual-variable coordination. Under such method, the agents are unable to dynamically balance reward and constraint requirements, leading to potential excessive penalty domination or insufficient constraint regulation. Furthermore, the Twin-Critic structure achieves the lowest EE value among all benchmarks. Although the additional critic network allows separate evaluations of EE and constraints, the framework lacks explicit coordination and multiplier optimization. As a result, the two critics potentially provide conflicting learning directions, destabilizing the policy update and convergence. 

\subsection{Effect of MF-RIS Elements}

\begin{figure}[!t]
\centering
\includegraphics[width=\figsize in]{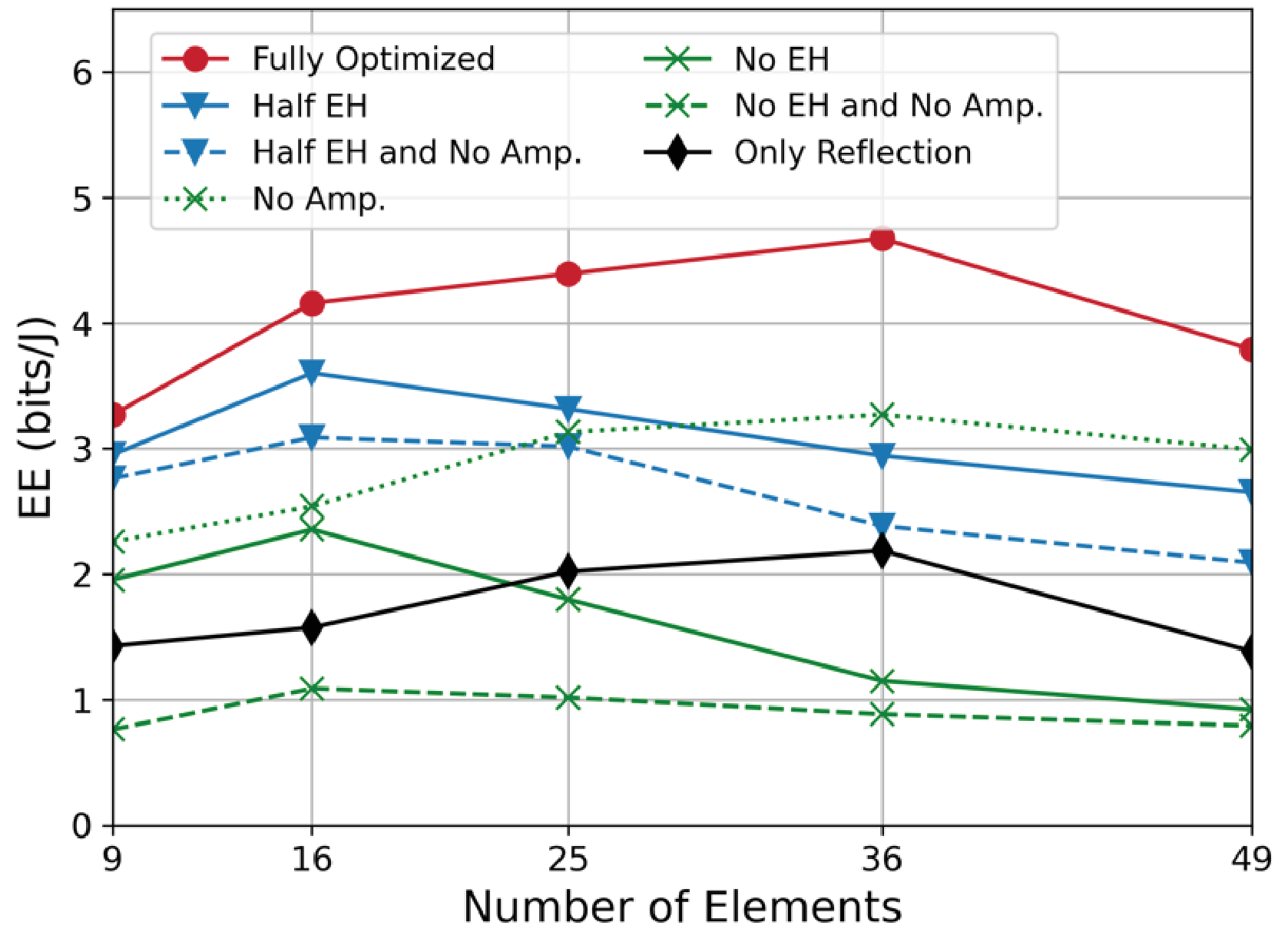}
\caption{EE performance with various MM-RIS configurations.} \label{f.sim.4}
\end{figure}

Fig. \ref{f.sim.4} compares the EE performance under different MM-RIS configurations, including cases with 
(1) fully optimized MM-RIS; 
(2) half EH capability, i.e., half elements are with $\alpha_{q,m}=0.5$; 
(3) without signal amplification $\beta_{q,m}^{k}=1$;
(4) without EH functionality of $\alpha_{q,m}=1$; 
(5) half EH and without amplification $\beta_{q,m}^{k}=1$; 
(6) without both EH and amplification; 
(7) reflection-only operation. 
The results indicate that when EH is disabled, the EE initially exhibits a slight improvement with an increasing number of elements, but subsequently declines significantly. This trend implies that additional MM-RIS elements initially provide enhanced array and beamforming gains, whereas excessive elements introduce substantially higher operational power consumption associated with movement, control, and signal amplification, which eventually dominates the achievable rate gain and degrades EE. In contrast, enabling EH provides significantly higher EE than the corresponding non-EH configurations because the harvested energy can partially compensate for the additional element-dependent power consumption and support active signal amplification. However, as the number of MM-RIS elements continues to increase, the incremental harvested energy may become insufficient to compensate for the increasing operational expenditure, resulting in a decline in EE. The crossover points among different MM-RIS configurations further reveal this competing relationship. For a relatively small number of elements, configurations with signal amplification can achieve higher EE because the amplification gain effectively compensates for severe channel attenuation and provides a substantial rate improvement relative to its additional power consumption. However, as the number of elements increases, the accumulated amplification and movement power grows, whilst the marginal communication gain gradually diminishes. As a result, configurations with partial or disabled amplification may become comparably more energy-efficient, leading to intersections among the corresponding EE curves. Similarly, EH shifts these crossover points by offsetting part of the increased operational power, allowing EH-enabled configurations to maintain their EE advantage over a wider range of element numbers. These crossings indicate that no single MM-RIS operating mode is uniformly optimal for all cases of $M$. Alternatively, the preferred configuration depends on the compelling balance among array gains, amplification gains, harvested energy, and element power consumption. For the half-EH case, the system achieves improved EE by leveraging partially harvested energy to support signal amplification with $\beta_{q,m}^k>1$, thereby outperforming its counterpart without amplification over most of the considered operating range. Without amplification, the MM-RIS behaves similarly to a passive metasurface and cannot effectively compensate for severe channel fading, resulting in reduced EE performance. Furthermore, the "No EH and No Amp." baseline generally exhibits the lowest EE among the STAR-capable MM-RIS configurations because both energy harvesting and active signal enhancement are disabled, effectively degenerating into a conventional passive STAR structure. Additionally, in the reflection-only RIS case, the transmitted signals cannot effectively serve users located on the opposite side of the surface, resulting in substantial EE degradation due to its limited half-space coverage. In contrast, the proposed MM-RIS simultaneously supports signal reflection and transmission, amplification, and EH, thereby providing full-space coverage while adaptively balancing communication enhancement and energy expenditure.

%
%

\subsection{Various Multiple Access Mechanisms}

\begin{figure}[!t]
\centering
\includegraphics[width=\figsize in]{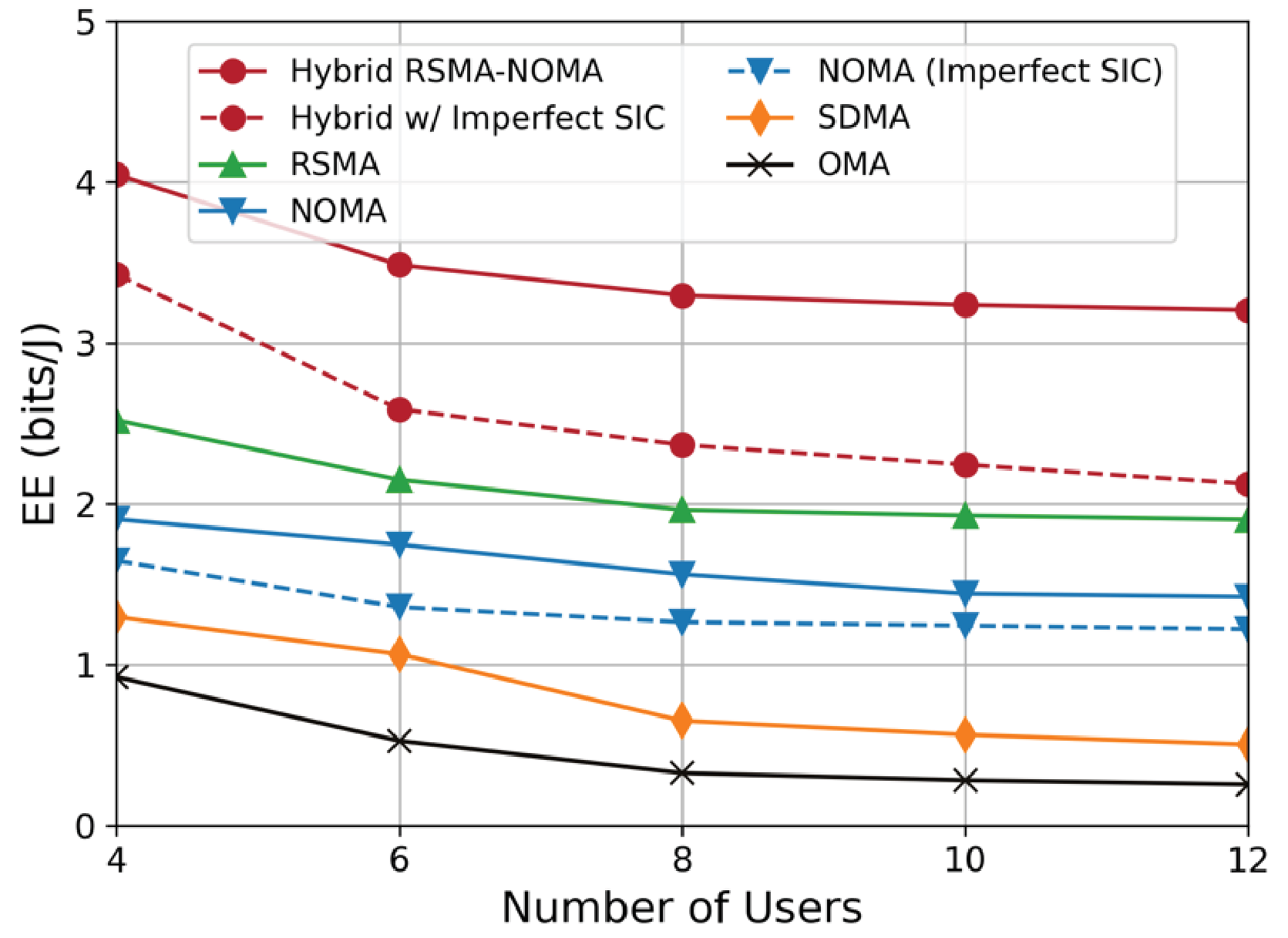}
\caption{EE performance of proposed hybrid RSMA-NOMA transmissions compared to other multiple access mechanisms.} \label{f.sim.7}
\end{figure}

Fig. \ref{f.sim.7} depicts the EE performance under different numbers of users. We compare the proposed hybrid RSMA-NOMA transmissions to the existing multiple access mechanisms, including RSMA \cite{2}, NOMA \cite{44}, spatial division multiple access (SDMA), and OMA. For conventional RSMA, the private streams are decoded without SIC, whereas the common stream is decoded first and removed before decoding the intended private stream. For the NOMA benchmark, the common streams of RSMA are disabled, and only the private streams are transmitted using power-domain multiplexing with SIC. For SDMA, the SIC operation is disabled, and the users are simultaneously served through spatial transmission. Accordingly, when decoding the signal intended for $U_{kj}$, the signals intended for all other users $i\in\mathcal{J}_k$, $i\neq j$, within the same group are treated as interference without cancellation. For OMA, time-division mechanism is adopted, i.e., the available transmission time is orthogonally divided among all users, such that only one user is served within each
allocated time fraction. Note that all methods are optimized based solely on CRAFTS, differing only in their multiple access mechanisms. As the number of users increases, the EE of all mechanisms gradually decreases due to intensified inter-user interference and limited power resources in denser environments. The proposed hybrid RSMA-NOMA scheme achieves the highest EE across all benchmarks, benefiting from the synergistic integration of RSMA and NOMA. This is because RSMA effectively mitigates inter-user interference through message splitting and common-stream decoding, whereas NOMA further improves spectral efficiency via power-domain multiplexing and SIC. Comparably, pure RSMA achieves moderate EE owing to its robust interference management capability, whereas NOMA outperforms SDMA and OMA due to its higher spectral utilization. However, the performance of NOMA degrades as the SIC process becomes increasingly sensitive to interference accumulation in denser user scenarios. Meanwhile, SDMA suffers from limited spatial DoFs, whereas OMA exhibits the lowest EE due to inefficient orthogonal resource allocation. To elaborate a little further, the residual interference may take place owing to the imperfect SIC process, i.e., the term of $\iota \cdot \| \mathbf{g}_{kj} \mathbf{f}_k^p \|^2 \sum\limits_{i\in\mathcal{J}_k, i<j} p_{ki}$ exists owing to imperfect SIC, expressed in the denominator of private SINR in \eqref{SIC_decode}. Note that the residual interference coefficient is set to $\iota=0.3$. Although EE values of both hybrid RSMA-NOMA and pure NOMA transmissions decrease under imperfect SIC, the proposed hybrid multiple access still maintains substantial EE advantages and even outperforms the perfect SIC versions of standalone RSMA and NOMA. This result indicates that the message-splitting structure of RSMA effectively alleviates residual interference and suppresses SIC error propagation.

%
%

\subsection{Different Types of Movable Structures}

\begin{figure}[!t]
\centering
\includegraphics[width=\figsize in]{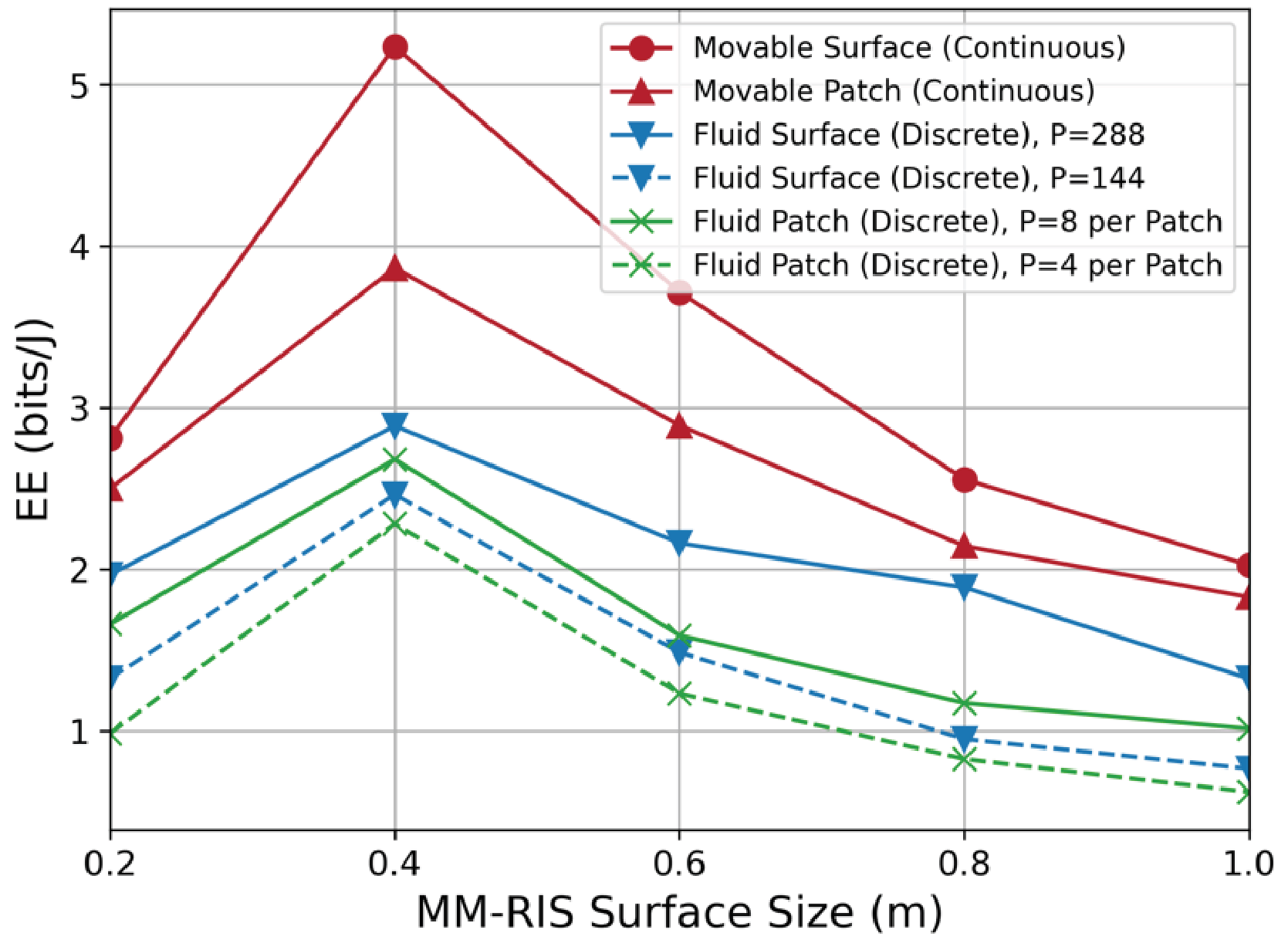}
\caption{EE of various movable mode and existing fluid structures under different MM-RIS surface sizes.} \label{f.sim.8}
\end{figure}

Fig. \ref{f.sim.8} evaluates the EE performance under various movement strategies for the movable metasurfaces across different surface sizes. Note that fluid surface \cite{24} is built based on the discrete port-based selection mechanism, whereas $P=\{144, 288\}$ predefined positions are considered, associated with $36$ elements. Note that $P=\{4,8\}$ candidate positions are considered for each fluid patch. The movable-surface schemes allow the MEs to move across the entire MM-RIS panel, whereas the movable-patch schemes restrict the movement region within local subareas. As observed, the continuously movable surface achieves the highest EE among all configurations thanks to the maximum spatial flexibility for enhancing beamforming efficiency. In contrast, the continuously movable patch case exhibits a lower EE value since its region for MEs is spatially constrained, limiting the achievable spatial diversity and beamforming adaptability. For the discrete movement cases, it can be known that configurations with larger preset-position numbers outperform the lower ones. This indicates that increasing the density of candidate positions effectively reduces the position quantization loss of MEs in fluid cases. Moreover, the fluid surface achieves higher EE compared to the patch counterparts due to the larger movement region and the corresponding increased spatial DoFs. To elaborate a little further, EE of all strategies initially increases as the MM-RIS surface size grows from $0.2$ m to approximately $0.4$ m. This demonstrates that enlarging the MM-RIS surface can improve beamforming gain and spatial diversity during the initial deployment stage. However, EE decreases when the MM-RIS size increases beyond $0.4$ m. This phenomenon implies that excessively large MM-RIS surfaces introduce substantial movement overhead in terms of time and energy for MEs, which eventually dominate the achievable communication gains.

\begin{figure}[!t]
\centering
\includegraphics[width=\figsize in]{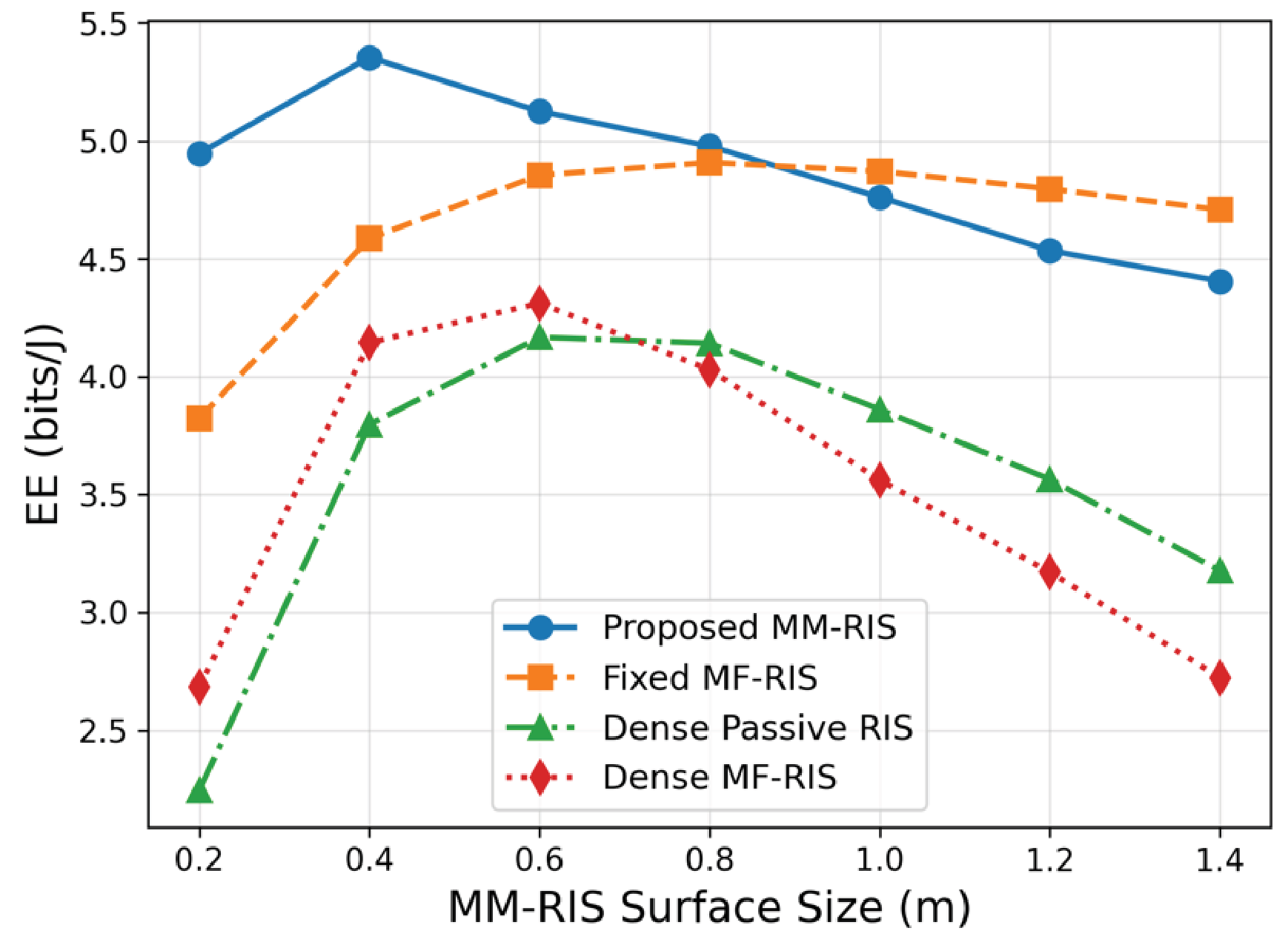}
\caption{Performance of EE under different RIS structures.} \label{f.sim.18}
\end{figure}

Fig. \ref{f.sim.18} compares the EE of the proposed MM-RIS with different RIS architectures versus the surface sizes. The proposed MM-RIS and fixed MF-RIS employ $M=16$ elements, where the latter only disables element movement. For the dense passive RIS and dense MF-RIS, fixed elements are uniformly deployed over the entire surface with $\lambda/2$ spacing. Accordingly, the numbers of dense elements are approximately $\{25,100,196,361,576,841,1089\}$ for sizes of $\{0.2,0.4,0.6,0.8,1.0,1.2,1.4\}$ m, respectively. The dense passive RIS supports reflection and phase control, whereas the dense MF-RIS retains the STAR, amplification, and EH capabilities. The proposed MM-RIS achieves the highest EE over most of the considered range. Due to the absence of element mobility, fixed MF-RIS exhibits a moderate EE compared to MM-RIS. When the surface size exceeds $1$ m, the fixed MF-RIS outperforms the MM-RIS since the required movement overhead of time and energy outweighs the improvement from element repositioning. For the dense structures, increasing the surface size allows more fixed elements to be deployed, which initially improves EE through the increased array gain. However, their EE decreases beyond size of $0.6$-$0.8$ m as the growing element power overhead outweighs the additional performance gain. This degradation is pronounced for the dense MF-RIS owning to its multi-functional circuits. Therefore, densely covering the entire surface does not necessarily provide superior EE, whereas the proposed MM-RIS can more efficiently exploit the available aperture using a limited number of MEs.

\begin{figure}[!t]
\centering
\includegraphics[width=\figsize in]{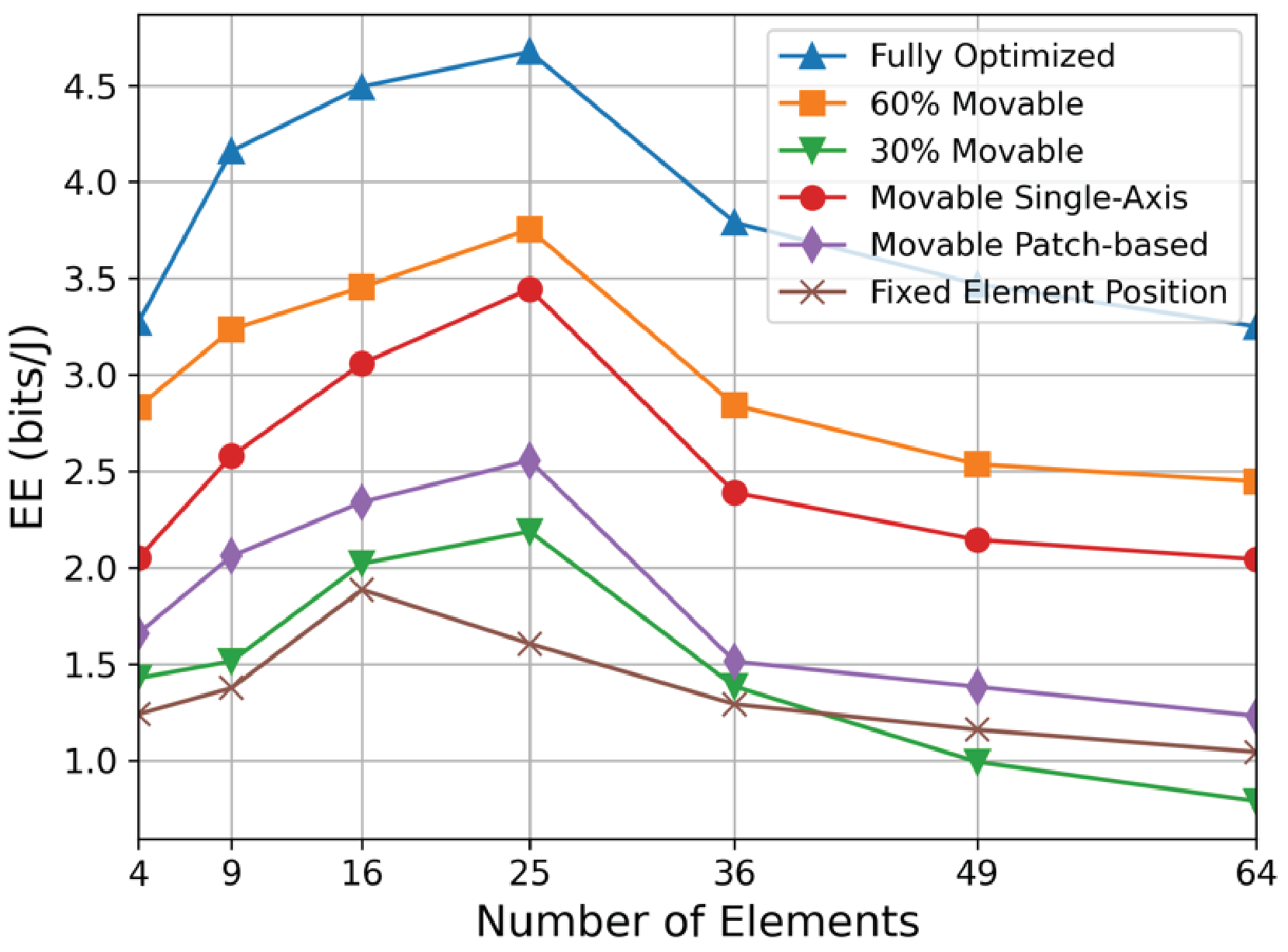}
\caption{EE with different numbers of elements under various MM-RIS structures.} \label{f.sim.10}
\end{figure}

Fig. \ref{f.sim.10} illustrates EE under different ME architectures with respect to the number of MM-RIS elements. Note that the $60\%$ and $30\%$ movable cases represent partial ME deployments, where only a subset of elements can dynamically reposition themselves. Meanwhile, the movable single-axis configuration restricts the MEs to one-dimensional movement \cite{1d}, whereas the movable patch-based method \cite{pat} organizes multiple elements into local movable clusters. Also, the fixed element position corresponds to conventional static element deployment without mobility. It can be seen that EE initially increases with more elements and reaches its peak around $M=25$ owing to the enhanced beamforming gains. However, EE decreases when $M>25$ because potentially reduced inter-element spacing introduces spatial correlation. Additionally, the increased power consumption and movement overhead of MEs might outweigh the rate performance. It is worth noting that increasing $M$ indeed provides additional spatial DoFs and can potentially improve the achievable rate. However, this does not necessarily result in a monotonic EE improvement since the additional deployed elements incur element-dependent circuit, control, amplification, and movement power consumption under the adopted power model. Particularly, leaving an additional ME unmoved or unchanged does not eliminate its associated hardware power consumption. Meanwhile, the marginal rate gain gradually diminishes as $M$ increases. Consequently, when the additional power expenditure exceeds the corresponding rate gain, the resulting EE decreases despite the increased spatial DoFs. Specifically, the fully optimized scheme from CRAFTS corresponds to where all MEs are movable and jointly optimized over the entire surface, which outperforms the other benchmarks. Furthermore, increasing the proportion of MEs significantly improves the EE performance. In particular, the $60\%$ ME configuration outperforms the $30\%$ ME case, demonstrating that the higher ME density provides stronger spatial adaptation capability. Intriguingly, the movable single-axis strategy achieves higher EE than the movable patch configuration. This observation implies that even restricted one-dimensional continuous movement still provides finer spatial adaptation resolution than coarse-grained patch-level movement. Another important crossover occurs between the $30\%$ MEs and fixed-element configurations at $M=36$. For a relatively small number of elements, even partial element mobility provides sufficient spatial adaptation gain to compensate for the additional movement overhead, and hence the $30\%$ movable configuration outperforms the fixed-element case. However, as $M$ increases, the number of MEs and their associated movement energy and repositioning overhead will also increase. Since only $30\%$ of the elements are movable, the resulting spatial adaptation gain becomes insufficient compared to the increasing overhead and diminishing marginal beamforming gains. Consequently, the fixed-element configuration without movement cost becomes more energy-efficient when $M \geq 36$.

\subsection{Impact of ME Movement}

\begin{figure*}[!t]
\centering
\subfigure[]{\includegraphics[width=2.2 in]{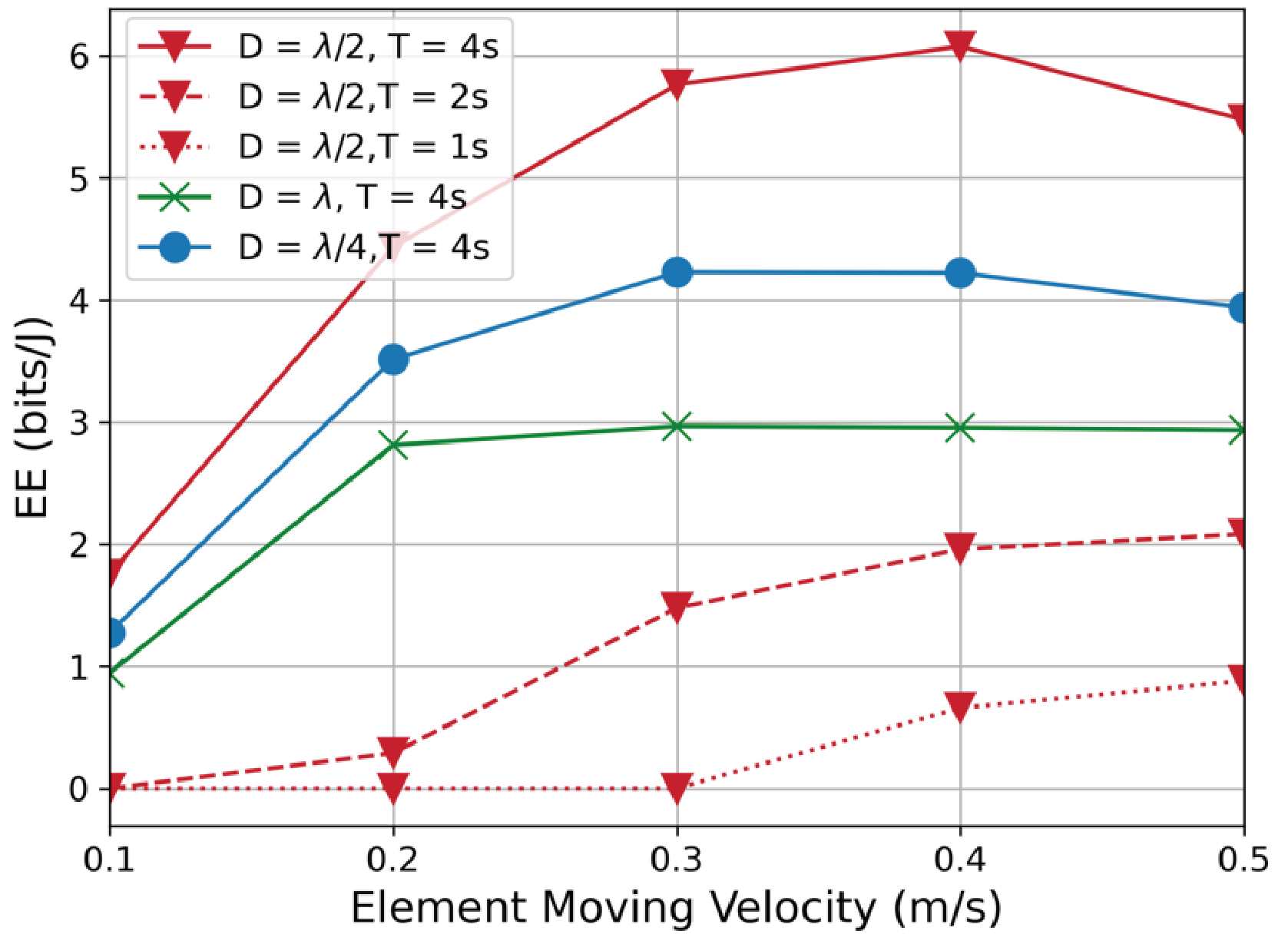}\label{f.sim.11a}}
\subfigure[]
{\includegraphics[width=2.2 in]{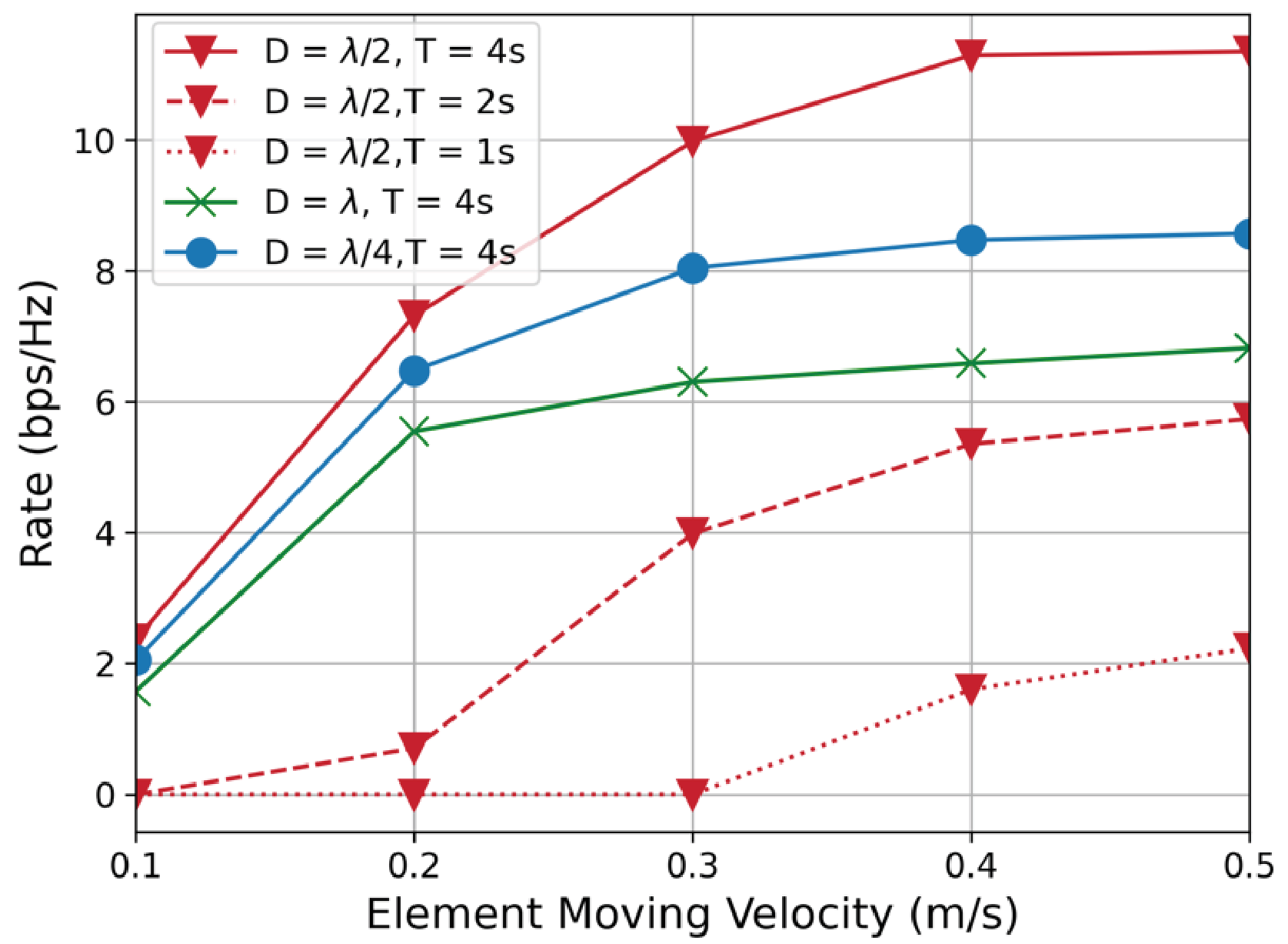}\label{f.sim.11b}}
\subfigure[]
{\includegraphics[width=2.2 in]{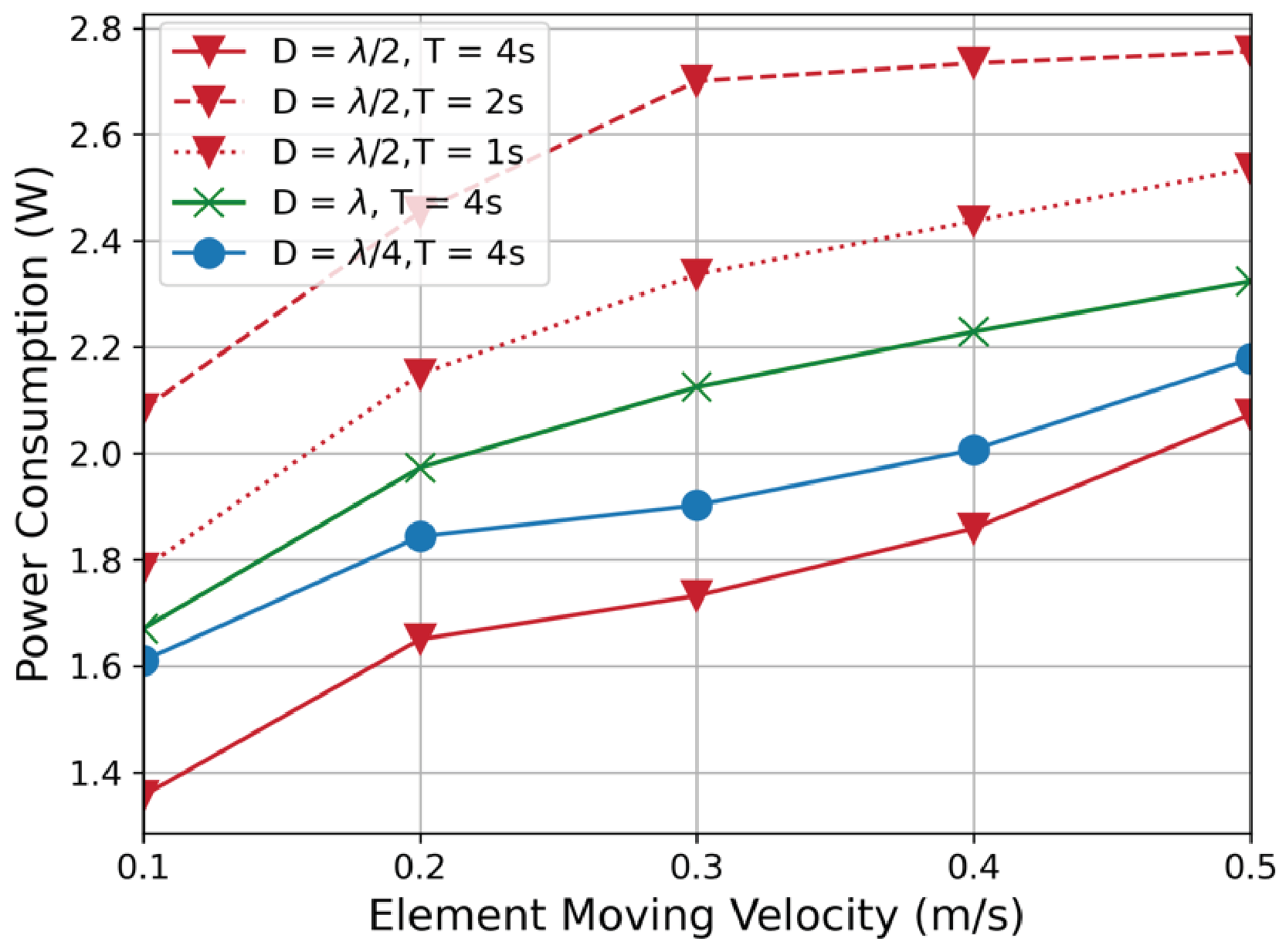}\label{f.sim.11c}}
\caption{Performance of (a) EE, (b) sum rate and (c) power consumption of different time durations and minimum inter-ME spacing under different velocities.}
\label{f.sim.11}
\end{figure*}

Fig. \ref{f.sim.11} illustrates the performance of MM-RISs in terms of EE, sum-rate and power consumption across varying element moving velocities, minimum inter-ME spacing distance $D$ and communication durations $T$. First, from Fig. \ref{f.sim.11b}, the achievable sum-rate generally increases as the element moving velocity increases from $0.1$ to $0.5$ m/s. This indicates that faster movement can more rapidly approach favorable channel positions, which improves beamforming gains. However, Fig. \ref{f.sim.11c} shows that the power consumption also increases with higher movement velocities due to the increased mechanical movement overhead. Consequently, EE in Fig. \ref{f.sim.11a} exhibits non-monotonic curves. EE initially improves with increasing velocity owing to enhanced rate, but gradually saturates or decreases at high velocities when the movement power dominates the rate gain. Such a phenomenon is particularly noticeable for the cases with longer communication durations, i.e., $T=4$. Furthermore, larger transmission durations achieve substantially higher EE and sum-rate as movement overhead can be alleviated over a longer communication interval to fully exploit the channel enhancement. In contrast, when $T \in \{1,2\}$, EE and sum-rate degrade significantly as the repositioning may occupy a considerable portion of the transmission block. In addition, EE and sum-rate of $D=\lambda$  become lower compared to $D=\lambda/2$ as large spacing limits MEs from approaching favorable channels. Meanwhile, when $D=\lambda/4$, the system also experiences performance degradation despite the increased movement flexibility. This result is potentially caused by the spatial correlation among closely spaced elements, which reduces channel diversity gains. 

\subsection{Effect of MM-RIS Deployment Strategy}

\begin{figure}[!t]
\centering
\includegraphics[width=\figsize in]{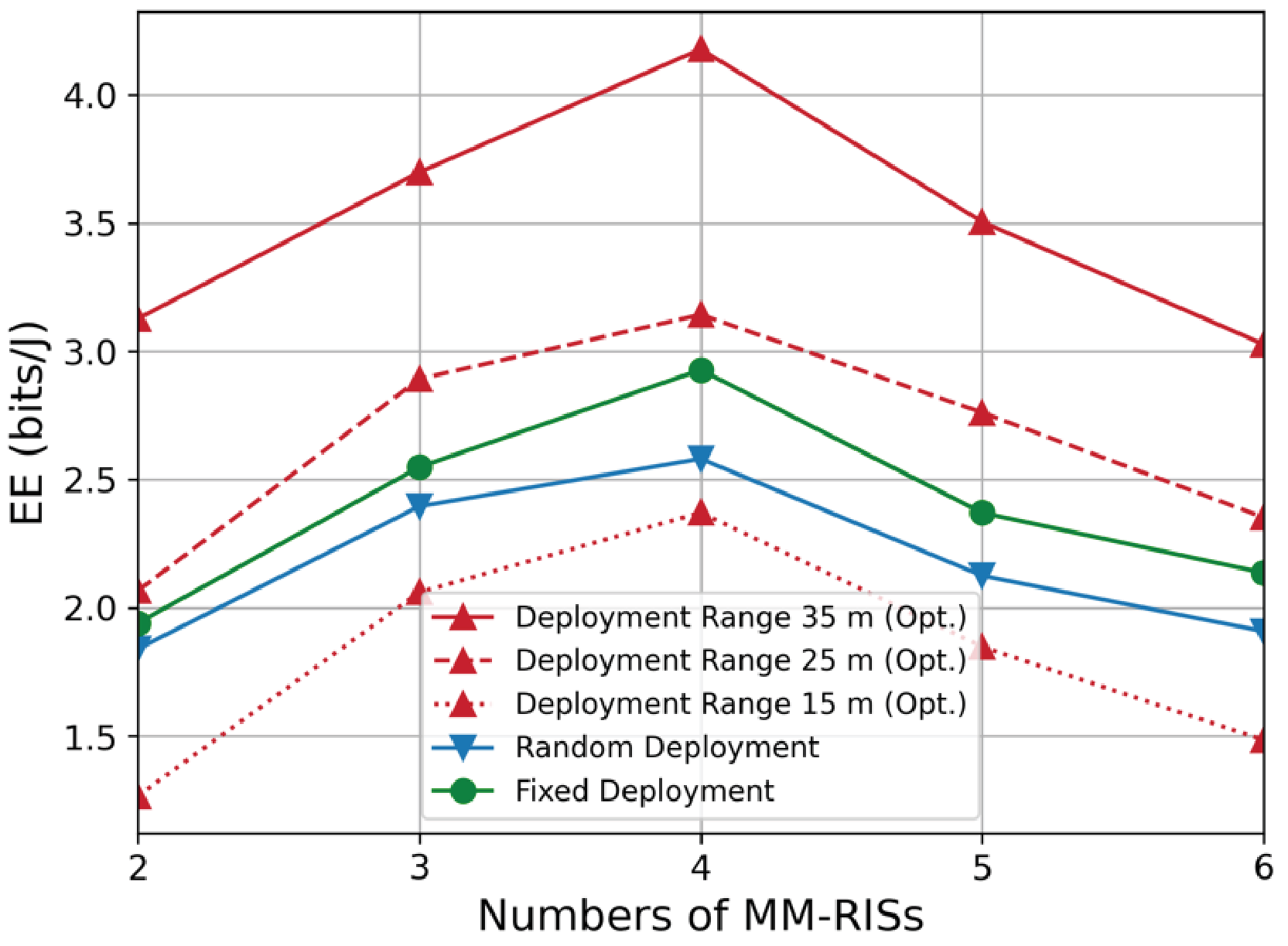}
\caption{EE performance with various deployment ranges and strategies.} \label{f.sim.13}
\end{figure}

Fig. \ref{f.sim.13} illustrates EE with different numbers of MM-RISs under various deployment policies. As observed, all schemes exhibit a concave trend, where EE initially increases with more MM-RISs and reaches its peak around four surfaces, followed by a gradual decline. This result indicates that deploying additional MM-RISs can effectively improve spatial diversity and communication coverage during the initial deployment. However, excessive MM-RIS deployments introduce substantial operational overhead for both the surfaces and MEs in terms of time and power consumption, which dominates the achievable rate gains and reduces the overall EE performance. Moreover, large deployment range of $35$ m provides higher spatial deployment flexibility compared to the other cases, allowing MM-RISs to dynamically locate more appropriate positions for improving channel conditions. Note that the fixed deployment uniformly distributes the MM-RISs within a deployment region of $35$ m, whereas the random case randomly places the MM-RISs within a coverage radius of $35$ m. It can be inferred that both policies achieve lower EE compared to the optimized deployment strategies due to the absence of adaptive deployment optimization. Nevertheless, both random and fixed deployments outperform the optimized deployment with a range of $15$ m. This is because a wider deployment region, even without explicit optimization, can still provide a higher probability of locating MM-RISs near favorable communication regions and better channel conditions.

%

\subsection{Impact of User Grouping Mechanisms}

\begin{figure}[!t]
\centering
\includegraphics[width=\figsize in]{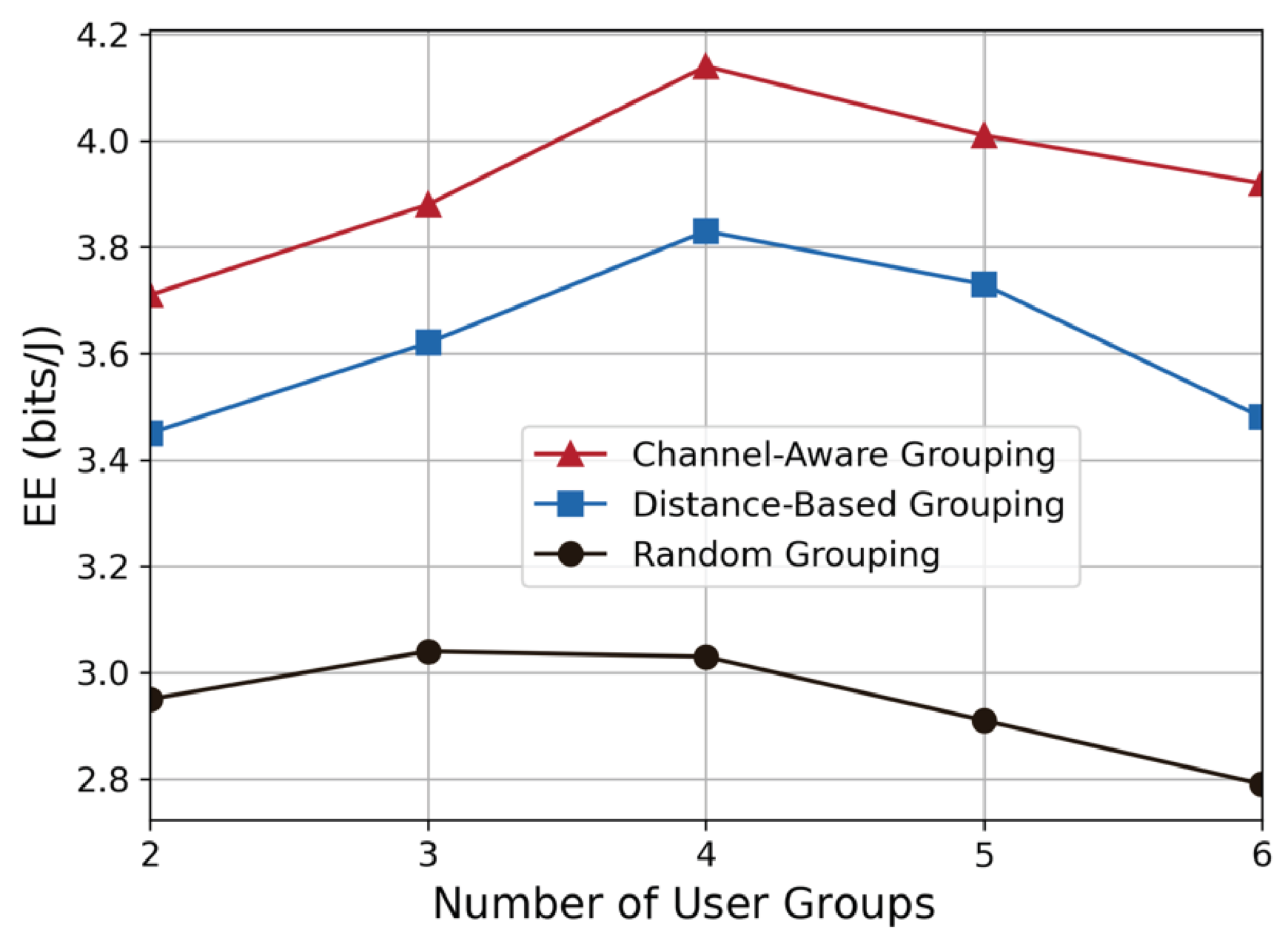}
\caption{Comparison of different user grouping mechanisms.} \label{f.sim.17}
\end{figure}

Fig. \ref{f.sim.17} investigates the impact of the number of user groups $K$, also referred to as $K$ directions, on the system EE under different grouping strategies. As $K$ increases
from $2$ to $4$, the EE of all schemes improves. This is because partitioning the users into more groups reduces the number of users multiplexed within each group, which is capable of alleviating intra-group interference. However, further increasing $K$ degrades the EE. With an excessive number of groups, more group-specific transmissions compete for the available communication resources and the resulting inter-group interference becomes more pronounced. Therefore, an intermediate number of groups provides a favorable balance between intra-group multiplexing and inter-group interference, with $K=4$ achieving the highest EE performance. Moreover, the channel-aware grouping scheme outperforms the distance-based and random grouping schemes for all considered values of $K$. This improvement is owing to the grouping users with highly correlated effective channel directions, which facilitates efficient beamforming, whereas the subsequent channel-gain-based user ordering supports effective NOMA SIC. In contrast, distance-based grouping only exploits geographical proximity and therefore does not necessarily capture the effective channel information. Random grouping does not exploit either channel or spatial information, leading to the lowest EE performance.

\subsection{Benchmark Comparison}

\begin{figure}[!t]
\centering
\includegraphics[width=\figsize in]{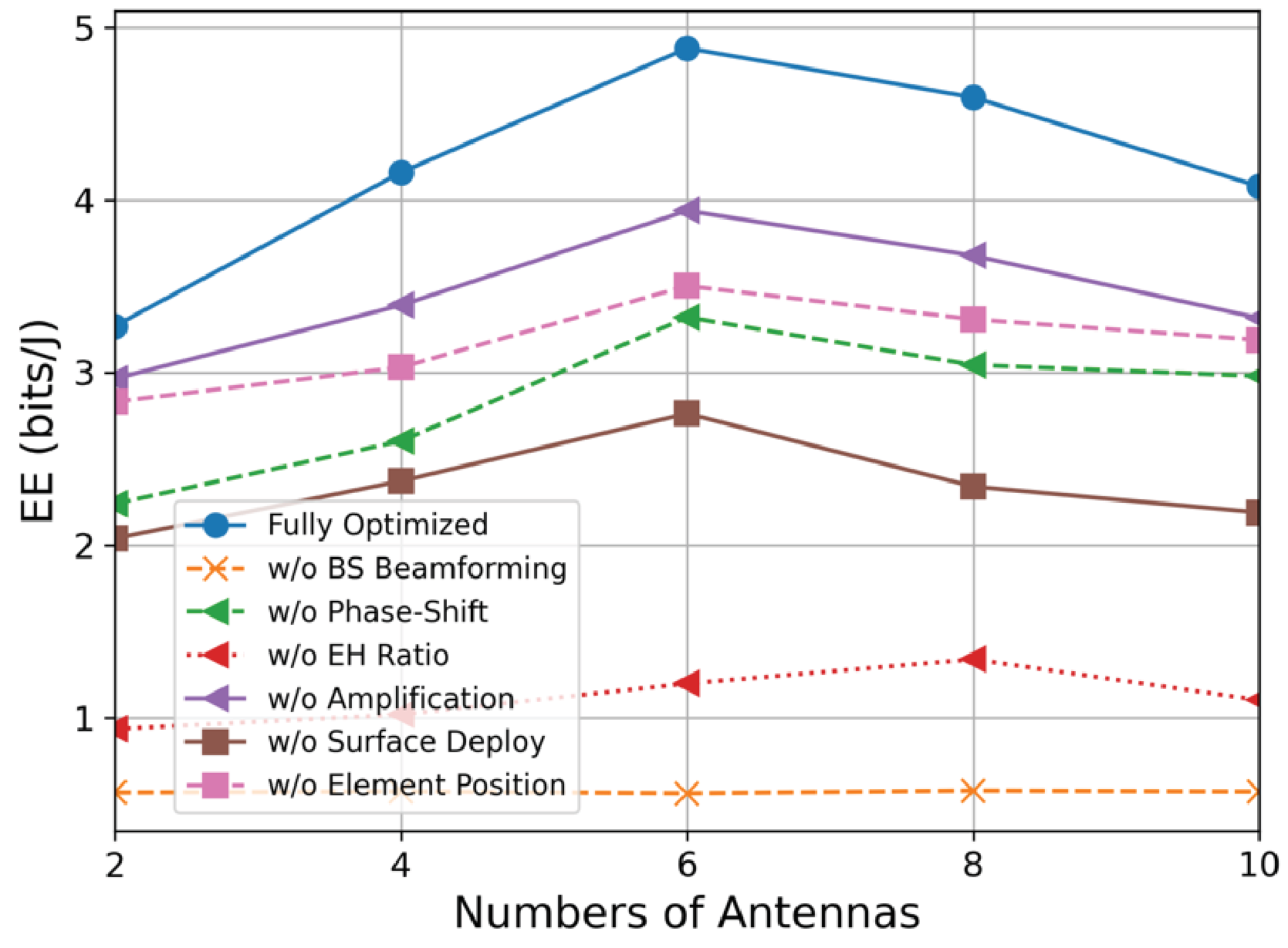}
\caption{Ablation studies of EE under different parameter configurations.} \label{f.sim.12}
\end{figure}

We conduct ablation studies of the proposed fully optimized scheme by CRAFTS in terms of EE performance across different baseline methods in Fig. \ref{f.sim.12}. Note that "w/o" indicates that the parameters are not optimized and randomly allocated. It reveals that the EE performance firstly escalates with an increasing number of antennas owing to the enhanced beamforming capability, followed by a peak at $N=6$. Then, EE declines as the total power consumption dominates the beamforming gains. To elaborate a little further, increasing $N$ enlarges the available beamforming DoFs and can potentially improve the achievable rate. However, the additional antennas at the BS also introduce antenna-dependent power consumption. Hence, assigning zero transmit power to an additional antenna does not completely eliminate its associated power consumption. As $N$ increases, the marginal beamforming and achievable rate gains become insufficient to compensate for the additional power consumption, resulting in the observed EE degradation beyond $N=6$. It is evident that the full optimization yields the highest EE, while omitting the optimization of specific parameters leads to noticeable EE degradation. Crucially, the configuration without optimizing BS beamforming results in the lowest EE performance of around $0.4$ bits/J as the active beamforming at BS is the fundamental prerequisite for establishing the primary link. Moreover, the random EH scheme also exhibits substantial EE degradation owing to inefficient energy allocation and insufficient harvested energy to support active component operations. Particularly, the random surface deployment has much lower EE than the case of the ME position adjustment, emphasizing the compelling importance of large-scale fading over small-scale fading. Meanwhile, the case without phase-shift optimization has lower EE than the random amplification benchmark, further confirming the importance of MM-RIS-based beam alignment. 

\begin{figure}[!t]
\centering
\includegraphics[width=\figsize in]{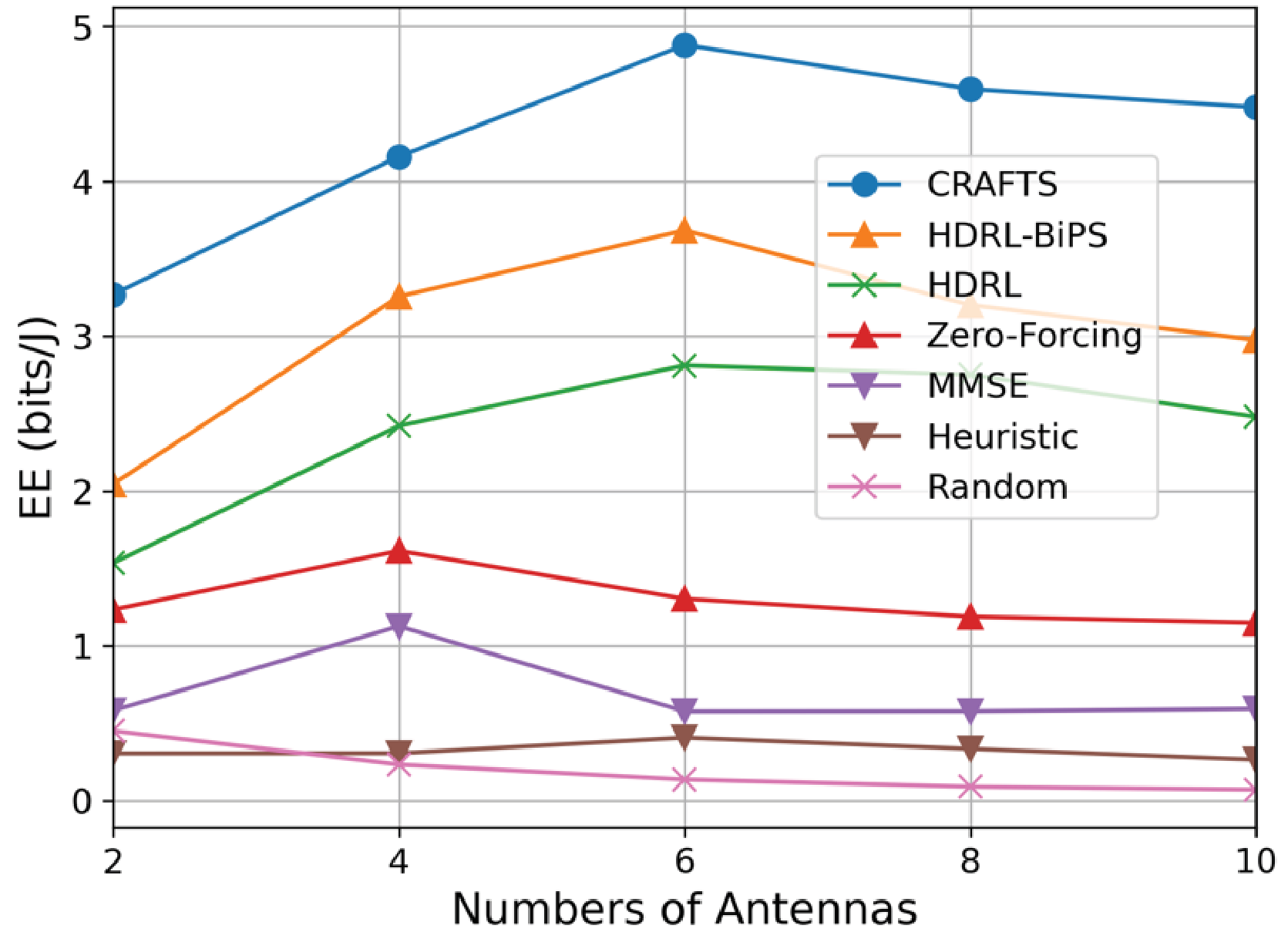}
\caption{EE performance of the proposed CRAFTS framework against various DRL and existing benchmarks.} \label{f.sim.15}
\end{figure}

Fig. \ref{f.sim.15} evaluates the EE performance of the proposed CRAFTS framework against several benchmarks under different numbers of antennas at BS, including HDRL-BiPS \cite{31}, HDRL \cite{han}, conventional beamforming methods of zero-forcing \cite{zf} and minimum mean square error (MMSE) \cite{mmse}, heuristic optimization using genetic algorithm \cite{ga}, and random configurations. Among all schemes, the proposed CRAFTS framework achieves the highest EE performance across all antenna numbers. This superiority stems from the synergistic integration of hybrid DRL, FL, transformer, and adaptive Lagrangian optimization. Comparably, the HDRL-BiPS scheme achieves the second-best EE due to the incorporation of the proposed bi-directional parameterized sharing mechanism, enabling information exchange between continuous and discrete action branches. However, without FL and transformer enhancement, its capability to handle highly-coupled variables remains limited. Meanwhile, the conventional HDRL scheme further suffers from reduced coordination capability and weaker feature extraction, leading to much lower EE than HDRL-BiPS. Furthermore, conventional beamforming methods exhibit substantially lower EE than DRL-based schemes since they only optimize BS beamforming without jointly coordinating MM-RIS components. Interestingly, the heuristic method achieves relatively poor EE despite its iterative optimization capability. This is because heuristic search often suffers from high computational complexity and limited adaptability in dynamic environments, making them more susceptible to converging toward local optimal solutions. Finally, the random scheme achieves the lowest EE performance and serves as the performance lower bound.

\subsection{Computational Complexity Analysis}

\begin{table}[!t]
\centering
\footnotesize
\setstretch{1.2}
\caption{Computational Complexity}
\label{t.complexity}
\begin{tabular}{|l|l|}
\hline
Algorithm & Computational Complexity \\ \hline
DQN & $\mathcal{O}((Q+1)\cdot N_B N_L N_H^2)$ \\ \hline
PPO & $\mathcal{O}((Q+1) \cdot 3 N_B N_L N_H^2)$ \\ \hline
HDRL & $\mathcal{O}((Q+1)\cdot 3 N_B N_L N_H^2)$ \\ \hline
MA-HDRL & $\mathcal{O}(3 N_B N_L N_H^2)$ \\ \hline
Transformer Module & $\mathcal{O}(N_B N_{L,\text{TR}}N_{H,\text{TR}}^2) \triangleq N_{\text{TR}}$ \\ \hline
CRAFTS w/o Transformer \& FL & $\mathcal{O}(3 N_B N_L N_H^2)$ \\ \hline
CRAFTS w/o FL & $\mathcal{O}(N_B (3 N_L N_H^2 + N_{\text{TR}} ))$ \\ \hline
CRAFTS  & $\mathcal{O}(N_B (3 N_L N_H^2+2N_{\text{TR}} ))$ \\ \hline
\end{tabular}
\end{table}

Table \ref{t.complexity} analyzes the computational complexity of the proposed CRAFTS scheme, its simplified version and the baseline methods. We define $N_B$ as the training mini-batch size, $N_L$ as the number of neural network layers and $N_H$ as the number of hidden neuron weights per layer. Note that for simplicity and for fair comparison, we use equivalent layers for each method and identical number of neurons for each layer. Additionally, we denote $N_{\text{TR}} = N_{L,\text{TR}} N_{H, \text{TR}}^2$ as the computational complexity of a transformer model, where $N_{L,\text{TR}}$ and $N_{H, \text{TR}}$ indicate the numbers of layers and of neurons, respectively. Moreover, we conduct several rounds and for averaging the per-step time overhead required. Simulations are performed using an Intel Core i7-13700 central processing unit (CPU) and NVIDIA GeForce RTX 4060 graphics processing unit (GPU). To evaluate the hardware efficiency, we quantify the storage memory and execution time per step for each agent across various benchmarks. The storage memory requirements for 
DQN, 
PPO, 
HDRL, 
MA-HDRL, 
CRAFTS without transformer and FL, 
CRAFTS without FL, and the proposed 
CRAFTS with full capabilities are $\{0.91, 0.39, 0.54, 0.29, 0.44, 0.68, 1.11\}$ MB, whilst their respective execution times per step are $\{0.498, 0.098, 0.019, 0.012, 0.017, 0.021, 0.026\}$ s. The results indicate that although the proposed CRAFTS introduces a marginal increase in computational overhead due to the integration of the transformer model and FL aggregation, it maintains a competitive execution speed, demonstrating its potential for low-latency resource management. It is also observed that DQN suffers from the highest latency of $0.91$ s per step, primarily because the curse of dimensionality in the discrete action space complicates the Q-value estimation process. In contrast, MA-HDRL achieves the lowest execution time of $0.012$ s per step, benefiting from the agent-wise decomposition of the original hybrid decision problem. The modest increases in memory usage from $0.44$ MB to $1.11$ MB and execution time from $0.017$ s to $0.026$ s are justified by the enhanced feature extraction capability of the transformer module and the improved convergence robustness provided by FL. In summary, the proposed CRAFTS framework effectively balances architectural complexity with execution efficiency, making it an ideal candidate for low-latency intelligent resource management in future sustainable MM-RIS-aided networks.

\section{Conclusions} \label{sec_con}
	We have proposed an MM-RIS-assisted hybrid RSMA-NOMA architecture by deploying ME-enhanced MF-RISs to enable EE-oriented joint communication and spatial optimization under stringent QoS, power budget, and self-sustainability constraints. To solve the high-dimensional mixed discrete-continuous and non-convex problem, we have proposed a CRAFTS framework incorporating a transformer-based feature extraction architecture and a hybrid DRL structure combining MAPPO and MADQN. The FL and bi-directional parametrized sharing mechanisms are designed to enhance learning stability and improve multi-agent coordination under highly coupled optimization variables. Furthermore, an adaptive primal-dual Lagrangian-based scheme is conceived to dynamically regulate constraint violations and ensure reliable constraint satisfaction. Simulation results demonstrate that the proposed CRAFTS framework substantially outperforms both existing centralized and decentralized DRL benchmarks, conventional precoding methods, as well as heuristic optimization approaches. The results also reveal an important trade-off between communication enhancement and the operational overhead of ME-based systems, where excessive amplification, movement flexibility, or surface deployment may eventually increase power consumption and reduce EE performance. Moreover, the proposed MM-RIS-assisted hybrid RSMA-NOMA achieves superior EE across several system configurations, including imperfect SIC conditions, fixed-EH configurations, conventional RIS architectures, and baselines without amplification, EH, or ME optimization. To elaborate further, the proposed CRAFTS framework substantially outperforms both existing centralized and decentralized DRL benchmarks, conventional precoding methods as well as heuristic optimization approaches in the open literature.

From a practical implementation perspective, CRAFTS does not require retraining for every instantaneous channel realization. The policy can be trained offline over diverse channel and user realizations and then employed for low-latency online inference. Nevertheless, its generalization performance may degrade under substantial distribution shifts in user locations or propagation environments that are not sufficiently represented during training, in which case policy fine-tuning or retraining may be required. Furthermore, a multi-timescale implementation is more suitable for rapidly varying channels, where beamforming, power allocation, and electronic MM-RIS configurations are adapted on a relatively fast timescale, while MM-RIS deployment and ME repositioning are updated on a slower timescale according to user mobility and large-scale channel variations. Developing transfer, meta-learning, and continual learning mechanisms for rapid adaptation to previously unseen environments constitutes an important direction for future work. Furthermore, incorporating heterogeneous movement velocities, energy consumption, and reconfiguration delays of panel-level deployment and element-level repositioning into a multi timescale optimization framework constitutes an important direction for future studies.

\bibliographystyle{IEEEtran}
\bibliography{abrv}
\end{document}